\documentclass{article}
\usepackage{ijcai26}

\usepackage{times}
\usepackage{soul}
\usepackage{url}
\usepackage[hidelinks]{hyperref}
\usepackage[utf8]{inputenc}
\usepackage[small]{caption}
\usepackage{graphicx}
\usepackage{amsmath}
\usepackage{amsthm}
\usepackage{booktabs}
\usepackage[ruled,linesnumbered,vlined]{algorithm2e}

\usepackage[switch]{lineno}
\usepackage{amssymb}
\usepackage{multirow}
\usepackage{microtype}
\usepackage{tabularx}
\newcommand{\descr}[1]{#1}
\newtheorem{definition}{Definition}
\newtheorem{lemma}{Lemma}
\usepackage{xcolor}
\usepackage{enumitem}
\newtheorem{theorem}{Theorem}
\title{Parameterized and Streaming Algorithms for Euclidean Fair $k$-Center Clustering}

\author{
Zeyu Lin$^1$
\and
Chaoqi Jia$^2$\and
Longkun Guo$^{1}$\thanks{Corresponding Author}\And
Chao Chen$^2$\\
\affiliations
$^1$School of Mathematics and Statistics, Fuzhou University, Fuzhou 350116, China\\
$^2$School of Accounting, Information Systems and Supply Chain, RMIT University, VIC 3000, Australia
\emails
zeyu\_lin@foxmail.com,
chaoqi.jia@rmit.edu.au,
longkun.guo@gmail.com, 
   chao.chen@rmit.edu.au
}

\begin{document}

\maketitle

\begin{abstract}

Motivated by the growing importance of fairness in machine learning, fair $k$-center clustering has attracted considerable research attention as a fundamental problem. In this problem, a dataset is partitioned into $m$ disjoint groups, and the objective is to select $k$ data points as centers, subject to upper bounds on the number of centers chosen from each group, aiming to minimize the maximum distance between any data point and its assigned center. Focusing on Euclidean spaces, which are ubiquitous in machine learning applications, we first develop a parameterized approximation algorithm for Euclidean fair $k$-center with an approximation ratio of $2.732$. By incorporating this algorithm as a post-processing stage into a one-pass streaming framework for large-scale data, we obtain an approximation ratio of $4.464$. These ratios can be further respectively improved to $2.414$ and $3.828$ with a runtime exponential on $k$. To ensure polynomial-time complexity, we further design a one-pass streaming algorithm with an approximation ratio of $4.732$, which can be further improved to $4.42$, outperforming the state-of-the-art ratio. Finally, extensive experiments show that our methods significantly outperform state-of-the-art approaches in terms of clustering accuracy.
\end{abstract}

\section{Introduction}

Center-based clustering is a typical unsupervised learning technique in machine learning (ML) with a wide range of applications, including data summarization~\cite{gadekar2025fair}, text analysis~\cite{jia2026textcluster}, and decision-making~\cite{huang2019coresets}. In particular, the $k$-center problem aims to minimize the maximum distance from any point to its nearest center and has been extensively studied. In general metric spaces, a tight $2$-approximation is achievable~\cite{gonzalez1985clustering,hochbaum1985best}, whereas in Euclidean spaces, it is NP-hard to obtain a polynomial-time approximation with a factor better than $1.82$~\cite{feder1988optimal}.

Many applications have proposed $k$-center problem with constraints including instance-level constraints ~\cite{Guo2024Efficient,guo2025near} for improve the clustering accuracy and  \emph{fairness} constraints  to mitigate representation bias~\cite{chierichetti2017fair,bera2022fair}. In particular, we focus on data summarization fairness which can limit the number of older messages included in a user's feed digest~\cite{mahabadi2023core} or enforce genre diversity in movie recommendations. For data summarization fairness,  the dataset $S$ is partitioned into $m$ disjoint groups $S = S_1 \cup \cdots \cup S_m$ and it requires that at most $k_l$ centers be selected from each group $S_l$. This constraint in the fair $k$-center ensures balanced representation by sensitive attributes. 

Under this fairness setting, there has been significant research interest in improving approximation ratios for the problem across different metrics, such as general metrics~\cite{guo2026improved}, Euclidean metrics~\cite{guo2026fair} and doubling metrics~\cite{ceccarello2024fast}, as well as under different computational models, including the offline setting~\cite{kleindessner2019fair,jones2020fair} and the streaming model~\cite{chiplunkar2020solve,lin2024streaming,guo2026improved}.

Inspired by \citeauthor{nagarajan2020euclidean}, we recently achieved a $(1+\sqrt{3})$-approximation for the Euclidean $k$-supplier problem, improving over the long-standing factor-$3$ barrier. 
In Euclidean space, we combine with both \citeauthor{jones2020fair} and \citeauthor{guo2026improved} to design algorithms for the fair $k$-center problem in the streaming model and achieve a $(3+\sqrt{3}+\epsilon)$-approximation guarantee. After that, we further aim to improve the approximation ratio. To this end, we propose the first fixed-parameter tractable (FPT) algorithm for fair $k$-center in Euclidean space, breaking the existing approximation barrier while handling both online model and offline setting.

\subsection{Related Work}

\paragraph{FPT $k$-Center under Constraints and Specialized Metrics.}
Fixed-parameter tractable (FPT) approximation algorithms have been extensively studied for the $k$-center problem in metrics. {\citeauthor{agarwal2002exact}} gave an approximation scheme with running time $O(n\log k)+(k/\epsilon)^{O(k^{1-1/d})}$
where $n$ is the input size and $d$ is the dimension. More recently, {\citeauthor{goyal2023tight}} established a unified FPT framework for a broad class of constrained $k$-center problem, obtaining $2$-approximation algorithms with runtime $k^{O(k)}n^{O(1)}$.
For  constrained variants, \citeauthor{bandyapadhyay2024parameterized} obtained a $3$-approximation for  non-uniform $k$-center with a runtime $2^{O(k\log k)}n^2$.
{\citeauthor{wu2023fair}} further studied fair $k$-center with outliers and obtained an approximation scheme with runtime $f(k,z,\epsilon)\cdot n^{O(1)}$, where
$f$ is a computable function depending on $k$, $z$ and $\epsilon$, $z$ denotes the maximum number of outliers.
For Euclidean spaces, {\citeauthor{bandyapadhyay2024parameterized}} also developed an approximation scheme for Euclidean non-uniform $k$-center with runtime $2^{O((k\log k)/\epsilon)}dn$.
For FPT algorithms  beyond metric spaces, {\citeauthor{zhou2025metric}} studied $k$-center on general graphs and proposed a $3$-approximation algorithm running in $O(2^{\alpha}\cdot n^5)$ time
as well as a $13$-approximation algorithm running in $\beta^{O(\beta)}\cdot poly(n)$
where $\alpha$ is the number of vertices participating in triangle-inequality violations and $\beta$ is the minimum number of vertices whose removal transforms the graph into a metric graph.

\paragraph{Data Summarization Fair $k$-Center.}
Fair clustering under group representation constraints, specifically, upper bounds on the number of centers per group, was formalized in the context of data summarization by \citeauthor{kleindessner2019fair}, who gave a $5$-approximation for two groups and a $(3 \cdot 2^{m-1} - 1)$-approximation for $m$ groups. This was significantly improved by \citeauthor{jones2020fair}, who achieved a $3$-approximation for arbitrary $m$ via a reduction to matching. \citeauthor{chen2024approximation} later studied a closely related diversity-aware fair $k$-supplier problem and also obtained a $5$-approximation using maximum matching. These offline results establish $3$ as the state-of-the-art approximation ratio in general metrics, and a natural question is whether this barrier can be broken in structured spaces $\mathbb{R}^d$.

However, all known algorithms achieving ratios below $5$ are either offline or require multiple passes; none operate in the one-pass streaming regime while exploiting Euclidean geometry or fixed-parameter tractability. 

\paragraph{Streaming Algorithms for Fair Clustering.}
Streaming fair clustering poses additional challenges due to the irrevocable nature of decisions and limited memory. Early work on streaming $k$-center with outliers~\cite{charikar2003better,matthew2008streaming} laid foundational techniques, but fairness constraints require more sophisticated summaries. \citeauthor{lin2024streaming} recently proposed a one-pass streaming algorithm for fair $k$-center achieving a $(7+\epsilon)$-approximation, while \citeauthor{chiplunkar2020solve} gave a two-pass streaming algorithm with ratio $3$. By leveraging a $\lambda$-independent center set and reducing the hardest case to a constrained vertex cover problem, \citeauthor{guo2026improved} presented a one-pass streaming algorithm that achieved a $(5+\epsilon)$-approximation for fair $k$-center in general metric spaces, improving over the previous $(7+\epsilon)$ bound. {For Euclidean space, \citeauthor{guo2026fair} further improved the approximation ratio to $(1+2\sqrt{3}+\epsilon)\approx 4.464$.} Notably, none of the prior streaming approaches leverage FPT subroutines or geometric coreset constructions to achieve better ratios, leaving a gap between offline Euclidean advances and streaming practice.

To the best of our knowledge, our work is the first to combine FPT techniques, Euclidean geometry, and streaming models to achieve better approximation ratio for fair $k$-center in a one-pass streaming setting.

\subsection{Our Contribution}
 
Motivated by the importance of Euclidean space in machine learning, we propose a suite of parameterized, polynomial-time approximation and streaming algorithms. The contribution can be summarized as follows:

\begin{itemize}[leftmargin=1em]
\item Devise a parameterized approximation algorithm with ratio $1+\sqrt{3}\approx2.732$ and integrate it into the streaming framework to obtain an approximation ratio  $1+2\sqrt{3}+\epsilon \approx 4.464$ for Euclidean fair $k$-center. These guarantees can be further improved to $1+\sqrt{2}\approx2.414$ and $1+2\sqrt{2}+\epsilon\approx 3.828$, respectively.

\item Propose a polynomial-time one-pass streaming algorithm for Euclidean fair $k$-center that uses $O(k\log \alpha)$ memory and achieves an approximation ratio of $3+\sqrt{3}+\epsilon\approx 4.732$ by integrating the milestone algorithm of \cite{jones2020fair}, where $\alpha$ is the aspect ratio, defined as the ratio between the maximum and minimum pairwise distances.

\item Conduct extensive experiments on real-world datasets to evaluate the practical performance of our algorithms, demonstrating that they significantly outperform all state-of-the-art methods  in terms of  solution quality.

\end{itemize}

{

Moreover, with a higher memory complexity of $O(h\cdot k\log \alpha)$, the approximation ratio of our streaming algorithm can be further improved to $3+\sqrt{2+\frac{2}{h}}\approx 4.42$ by setting $h=122$, which slightly improves upon the state-of-the-art ratio of $4.464$ due to~\citeauthor{guo2026fair}.}

\section{Preliminary}
\label{sec:pre}
Let $S$ be a finite set of $n$ data points distributed in an Euclidean space, where the distance function $d: S \times S \rightarrow \mathbb{R}_{\geq 0}$. For a given parameter $k \in \mathbb{N}$, the traditional $k$-center clustering problem seeks to select a set of $k$ centers $C \subseteq S$ such that $\max_{s \in S} d(s, C)$ is minimized, where $d(s, C) = \min_{c \in C} d(s, c)$ denotes the distance from a point $s$ to its nearest center in $C$.

\subsection{Problem Statement}
The fair $k$-center clustering problem extends this by imposing additional fairness constraints. Specifically, the dataset $S$ is divided into $m$ disjoint groups: $S = S_1 \cup S_2 \cup \cdots \cup S_m$. Each group $S_l$ ($l \in [m]$, where $[m] = \{1, 2, \ldots, m\}$) has an associated upper bound $k_l$ on the number of centers that can be selected from it. These constraints ensure that the total number of chosen centers across all groups equals $k$: $\sum_{l=1}^m k_l = k$. This formulation aims to balance the representation of each group in the final clustering solution, thereby mitigating potential biases and ensuring fairness. Then,  the fair $k$-center clustering problem is to find a center set $C$ satisfying the formulation as follows: 
\begin{align}
\min_{C \subseteq S}  && \max_{s\in S}d\left(s,C\right) &&\nonumber \\
s.t. && \mid C\cap S_{l}\mid&\leq k_{l},\, \forall l &\label{eq:fair}\\
&&\mid C\cap S\mid&=\sum_{l=1}^{m} \mid C\cap S_{l}\mid \leq k. &
\end{align}

We assume that the optimal radius of the fair $k$-center problem is known  as $r^*$ (i.e., $r^{*}= \max_{s\in S}d\left(s,C^*\right)$ for the optimum center set $C^*$). Note that we actually do not know the exact value of $r^*$. 
 Nevertheless, we devise $r^*$ following the previous works~\cite{Jia2026Approximation} for the offline algorithm and \cite{guha2009tight} for the online algorithm.

\subsection{\texorpdfstring{$\lambda$-Independent Center Set in Euclidean Space}{lambda-Independent Center Set in Euclidean Space}}

A key structural tool in this paper is the $\lambda$-independent center set, a concept formalized in the streaming setting by \cite{guo2026improved}. Its definition is as follows:

\begin{definition}
\label{def:1}($\lambda$-independent center set) $\Gamma\subseteq S$ is a $\lambda$-independent center set of $S$, if and only if it satisfies the following two conditions:
\begin{itemize}[leftmargin=1.3em]
    \item[ 1)] For  any two points  $p,q\in \Gamma$, the distance between them is larger than $\lambda$, i.e. $d(p,q)>\lambda$.
    \item[ 2)] For any point $p\in S$, there exists a point  $q\in \Gamma$, such that $d(p,q)\leq \lambda$.
\end{itemize}
\end{definition}

Recall that $C^*$ is the optimal center set and $r^{*}$ is the optimum radius. Then we have:

\begin{lemma}
    \label{lem:sqrt3d-independent center set} 
    Assume  $\Gamma\subseteq S$ is a $\lambda$-independent center set in Euclidean space.
 If $\lambda = \sqrt {3}r^{*}$, then $|\Gamma|\leq 2k$. 
\end{lemma}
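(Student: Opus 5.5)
The plan is a pigeonhole argument over the optimal clusters, with Euclidean geometry bounding how many points of $\Gamma$ each optimal cluster can contain. Let $C^*=\{c_1^*,\dots,c_k^*\}$ and assign every point of $S$ to its nearest optimal center, giving clusters $O_1,\dots,O_k$ with each $O_j\subseteq B(c_j^*,r^*)$. Since $\Gamma\subseteq S$, every point of $\Gamma$ lies in some $O_j$. It therefore suffices to show that
\[
|\Gamma\cap O_j|\le 2 \quad \text{for every } j,
\]
because then $|\Gamma|\le 2k$. Only condition 1) of Definition~\ref{def:1} is needed: any two points of $\Gamma$ are at distance greater than $\lambda=\sqrt{3}\,r^*$.

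Suppose, for contradiction, that three points $p_1,p_2,p_3\in\Gamma$ lie in the same cluster $O_j$. Translate so that $c_j^*=0$. Then $\|p_i\|\le r^*$ for each $i$, and $\|p_a-p_b\|^2>3(r^*)^2$ for each pair $a\neq b$. Summing the three pairwise squared distances and using the standard identity gives
\[
\sum_{a<b}\|p_a-p_b\|^2 \;=\; 3\sum_{i}\|p_i\|^2-\Bigl\|\sum_i p_i\Bigr\|^2 \;\le\; 3\cdot 3(r^*)^2 = 9(r^*)^2 .
\]
The separation condition, however, forces the left-hand side to exceed $3\cdot 3(r^*)^2=9(r^*)^2$, a contradiction. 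Hence no cluster contains three points of $\Gamma$.

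The identity is where Euclidean structure is used; it comes from expanding the squared norms through inner products. This step is the crux, and also the reason the threshold is exactly $\sqrt{3}$: the bound is tight for an equilateral triangle inscribed in a great circle of the ball of radius $r^*$. The argument works in any dimension, since the three points span a plane anyway. One case needs checking: the $p_i$ might coincide with the center $c_j^*$ or with each other. Coincident points are excluded by the strict separation $d>\lambda$, and the inequality covers every other configuration without further case analysis. Combining the two paragraphs gives $|\Gamma|\le 2k$.
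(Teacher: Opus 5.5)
Your proof is correct, and its outer structure matches the paper's: a pigeonhole argument over the optimal centers showing that no ball $B(c^*,r^*)$ contains three points of $\Gamma$, so $|\Gamma|\le 2|C^*|\le 2k$.

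The difference is in the key geometric step. The paper simply asserts the planar fact that three points in a disk of radius $r^*$ have minimum pairwise distance at most $\sqrt{3}r^*$, with the inscribed equilateral triangle as the extremal case. It does not prove that fact, nor the passage from a ball in $\mathbb{R}^d$ to a planar disk. You prove the bound directly via $\sum_{a<b}\|p_a-p_b\|^2=3\sum_i\|p_i\|^2-\|\sum_i p_i\|^2$. This is self-contained and dimension-free, so your remark that the three points span a plane is not even needed.

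Your route also generalizes verbatim. For $h+1$ points in a ball of radius $r^*$, the same identity bounds the sum of squared pairwise distances by $(h+1)^2(r^*)^2$. Pairwise separation exceeding $\sqrt{2+2/h}\,r^*$ forces that sum to exceed $\binom{h+1}{2}(2+\frac{2}{h})(r^*)^2=(h+1)^2(r^*)^2$. This gives precisely the appendix bound $|\Gamma|\le hk$ for $\lambda=\sqrt{2+\frac{2}{h}}\,r^*$, and $h=1$ gives the $\lambda=2r^*$ remark. The paper instead attributes that bound to Gr\"{u}nbaum, under a dimension hypothesis $\delta\ge h$ that the upper bound does not need.

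In short, the paper's phrasing conveys the extremal configuration, while yours gives rigor and a uniform one-line proof of the whole family of bounds.
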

\begin{proof}
    We first show that the number of points in $\Gamma$ within distance $r^{*}$ from each $c^*\in C^*$ is at most two. Suppose that there exists a center $c^*$ with three points $c_{1},c_{2},c_{3}\in \Gamma$. Then we have $d(c^{*},c_{i})\leq r^{*}$ for
$i\in\left\{ 1,2,3\right\} $. That is, the circle centered at
$c^*$ with radius $r^{*}$  has $\left\{ c_{1},c_{2},c_{3}\right\} $
in its interior. Then, by the geometric property of any inner triangle within a circle, we have  
\[\max \underset{\forall i,j\in\left\{ 1,2,3\right\}}{\min}d(c_{i},c_{j}) =\sqrt{3}r^*,\] 
where the maximum is attained when all the three points $ c_{1},c_{2},c_{3}$ are exactly on the circle and the distance between any two points equals $\sqrt{3}r^*$. This contradicts the fact that the
distance between any pair of points in $\Gamma$ is strictly greater than $\sqrt{3}r^{*}$. So there are at most two points of $\Gamma$ that are within distance $r^{*}$ from any $c^*\in C^*$. Therefore, the size of $\Gamma$ is at most $2|C^*|=2k$.
\end{proof}
Moreover, when $\lambda = 2r^{*}$, we can  prove $|\Gamma|\leq k$ similarly.

\section{\texorpdfstring{Parameterized and Streaming Algorithms for Euclidean Fair $k$-Center}{Parameterized Algorithms for Fair k-Center in Euclidean Space}}
\label{sec:Offline}

 In this section, we first show that the $\lambda$-independent center set can lead to a parameterized approximation algorithm with approximation ratio $1+\sqrt{3}\approx 2.732$ in Euclidean space, which is better than $3$ for the metric space when requiring a runtime exponential on $k$.  
Then, we propose incorporating the parameterized algorithm into the post-processing stage of the streaming framework for fair $k$-center clustering, achieving an overall approximation ratio of $1+2\sqrt{3}+\epsilon$, which is approximately  $4.464$ for small $\epsilon$. Moreover, we show that the ratio of the offline parameterized approximation algorithm can be further improved to $2.414$, and consequently, the ratio of the streaming algorithm is then improved to $3.828$. The proof of the improvement will be provided in the full version.

\subsection{Offline Parameterized Approximation}
\label{subsec:off_approx}

First, we show that $\lambda$-independent center sets with $\lambda = \sqrt{3}r^*$ can be employed to derive a parameterized approximation algorithm as $\bigcup_{l=1}^{m} \Gamma_l$ contains a desirable approximation solution, where $\Gamma_l$ is a $\lambda$-independent center set for $S_l$ with $\lambda=\sqrt{3}r^*$:

\begin{theorem}
\label{thm:the property of sqrt3r-independent set}
    For $\lambda=\sqrt{3}r^*$, there exists $\Gamma \subseteq \bigcup_{l=1}^{m} \Gamma_l$ for which the following three conditions hold:  (1) for each point $s\in S$, there must exist a point $c\in \Gamma$ with $d(s,c)\leq (1+\sqrt{3})r^*$; (2) $\vert \Gamma\vert \leq k$; (3) $|\Gamma\cap S_l|\leq k_l$.   In other words,   there exists $\Gamma$ in $\bigcup_{l=1}^{m} \Gamma_l$  that is a  $(1+\sqrt{3})$-approximation solution for the fair $k$-center problem.             
\end{theorem}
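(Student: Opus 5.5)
Fix an optimal center set $C^*$ with radius $r^*$ satisfying the fairness constraints, and for each $c^*\in C^*$ let $l(c^*)$ be the group containing $c^*$. The plan is to replace each optimal center $c^*$ by a single point of $\Gamma_{l(c^*)}$ near it. Since $\Gamma_{l(c^*)}$ is a $\sqrt{3}r^*$-independent center set of $S_{l(c^*)}$ and $c^*\in S_{l(c^*)}$, condition 2) of Definition~\ref{def:1} gives a point $\gamma(c^*)\in\Gamma_{l(c^*)}$ with $d(c^*,\gamma(c^*))\le\sqrt{3}r^*$. Set
\[
\Gamma=\{\gamma(c^*) : c^*\in C^*\}\subseteq \bigcup_{l=1}^m\Gamma_l .
\]

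The three properties then follow directly. For (1), any $s\in S$ has some $c^*\in C^*$ with $d(s,c^*)\le r^*$. The triangle inequality gives $d(s,\gamma(c^*))\le r^*+\sqrt{3}r^*=(1+\sqrt{3})r^*$. For (2), the map $\gamma$ shows $|\Gamma|\le|C^*|\le k$. For (3), every point of $\Gamma\cap S_l$ equals $\gamma(c^*)$ for some $c^*$ with $l(c^*)=l$. This uses that $\gamma(c^*)\in\Gamma_{l(c^*)}\subseteq S_{l(c^*)}$ and that the groups are disjoint. Hence $|\Gamma\cap S_l|\le|C^*\cap S_l|\le k_l$.

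The one step that needs care is choosing the replacement from the group of $c^*$ itself, namely from $\Gamma_{l(c^*)}$, rather than the nearest point of $\bigcup_l\Gamma_l$. That choice is what transfers the fairness constraint from $C^*$ to $\Gamma$. Everything else is bookkeeping.

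It is also worth noting what the theorem is for, though the proof above does not need it. By Lemma~\ref{lem:sqrt3d-independent center set} applied to each group, $\bigcup_l\Gamma_l$ has bounded size, roughly at most $2k$ points per relevant group. So a search over subsets of $\bigcup_l\Gamma_l$ is an FPT procedure, which is the intended algorithmic consequence. The existence claim itself uses only the covering property 2) of the independent sets and the triangle inequality.
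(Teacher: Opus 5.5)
Your proof is correct and follows the paper's own argument. The paper likewise maps each optimal center $c^*\in C^*\cap S_l$ to a point of $\Gamma_l$ within $\sqrt{3}r^*$, using $c^*$ itself when it already lies in $\bigcup_l\Gamma_l$, a case your choice of $\gamma(c^*)$ already covers. It then reads off the three conditions from the triangle inequality and the fact that the map preserves groups.
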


\begin{proof}
Let $C^{*}$ be an optimal solution. We first give a method to construct a $\Gamma$ from $\bigcup_{l=1}^{m} \Gamma_l$ according to $C^{*}$, and then show that the constructed  $\Gamma$ satisfies all three conditions. 

For the first, our construction simply proceeds as in the following: for each center $c^{*}\in C^{*}$, if $c^{*}\in C^{*}$ is an element of $\bigcup_{l=1}^{m} \Gamma_l$, then add  $c^{*}$ to $\Gamma$; Otherwise, 
find group $S_l$ that contains $c^*$, and then find a point $c\in \Gamma_l$ which is within $\sqrt{3}r^{*}$ distance away from $c^{*}$, i.e. with $d(c,c^*)\leq \sqrt{3}r^{*}$. So, for each $s\in S$, there must exist $c\in\Gamma$ such that $d(s,c)\leq d(s,c^*)+d(c^*,c)\leq r^*+\sqrt{3}r^*=(1+\sqrt{3})r^*$, and Cond.~(1) holds. As the construction maps each optimal center $c^*\in C^*\cap S_l$ to one center $c\in \Gamma\cap S_l$, Cond.~(2) and Cond.~(3) hold according to the property of the optimal center set $C^*$. 
\end{proof}

 Based on this existence guarantee, our enumeration-based parameterized algorithm simply proceeds in two stages: 
\begin{itemize}[leftmargin=1.4em]
    \item [1)] For each group $l \in [m]$, compute a $\lambda$-independent center set $\Gamma_l$ with $\lambda = \sqrt{3}r^*$.

\item [2)] Enumerate all the possible center sets  $C$ with size bounded by $k$ and satisfying the fairness constraint, and choose $C$ with the smallest radius among all the computed center sets.
\end{itemize}

\begin{lemma}
\label{lem:algosqrt3r-independentset}  
   The algorithm above can correctly find $\Gamma$ that satisfies the three conditions of Thm.~\ref{thm:the property of sqrt3r-independent set}.                   
\end{lemma}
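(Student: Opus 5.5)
The argument is a standard correctness proof for exhaustive enumeration, with Theorem~\ref{thm:the property of sqrt3r-independent set} providing existence. I will first pin down what stage 2) enumerates. The candidates are all subsets $C \subseteq \bigcup_{l=1}^{m}\Gamma_l$ with $|C|\le k$ and $|C\cap \Gamma_l|\le k_l$ for every $l$. Since the groups $S_l$ are disjoint and $\Gamma_l\subseteq S_l$, we have $C\cap S_l = C\cap\Gamma_l$, so these are exactly the subsets of $\bigcup_l\Gamma_l$ satisfying the constraints in~\eqref{eq:fair}.

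Next I check that stage 1) can be carried out. For each group, a greedy scan builds $\Gamma_l$: a point of $S_l$ is added whenever its distance to all points already chosen exceeds $\lambda$. The result satisfies both conditions of Definition~\ref{def:1} with respect to $S_l$. Condition 1) holds by construction. Condition 2) holds because any point that was not added lies within $\lambda$ of some point that was chosen.

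Lemma~\ref{lem:sqrt3d-independent center set} shows that the search space is finite and of parameterized size. Its argument applies verbatim to $\Gamma_l$: each optimal center can have at most two points of $\Gamma_l$ within distance $r^*$, and every point of $S_l$ is within $r^*$ of some optimal center. Hence $|\Gamma_l|\le 2k$. This gives $|\bigcup_l\Gamma_l|\le 2km$, or $2k$ times the number of nonempty groups, and so the enumeration ranges over at most $\sum_{i\le k}\binom{2km}{i}$ sets. That count bounds the running time. Correctness itself only needs finiteness.

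The core step combines existence with minimality. By Theorem~\ref{thm:the property of sqrt3r-independent set}, some set $\Gamma\subseteq\bigcup_l\Gamma_l$ satisfies conditions (2) and (3). It therefore appears among the enumerated candidates, and by condition (1) its radius $\max_{s\in S} d(s,\Gamma)$ is at most $(1+\sqrt{3})r^*$. The algorithm outputs a candidate $C$ of smallest radius, so
\[
\max_{s\in S} d(s,C)\le \max_{s\in S} d(s,\Gamma)\le (1+\sqrt3)r^*,
\]
which gives condition (1) for $C$. Conditions (2) and (3) hold for $C$ because only feasible sets are enumerated. Thus $C$ satisfies all three conditions, which is what the lemma claims, even though $C$ may differ from the $\Gamma$ built in the proof of the theorem.

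The main subtlety is the assumption that $r^*$ is known. I would handle it as the paper indicates: run the procedure for each candidate radius, using the $O(n^2)$ pairwise distances in the offline setting, and keep the best output. When the guess equals $r^*$, the bound above holds, and taking the overall minimum can only improve the radius. A secondary point is that the existence argument in Theorem~\ref{thm:the property of sqrt3r-independent set} implicitly needs each $\Gamma_l$ to cover all of $S_l$ within $\lambda$, and this is guaranteed by condition 2) of Definition~\ref{def:1}.
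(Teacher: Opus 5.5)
Your proof is correct and follows essentially the paper's route. The paper's proof rests on the same greedy fact you verify for stage 1): if an optimal center $c^*\in S_l\setminus\Gamma_l$ had no point of $\Gamma_l$ within $\sqrt{3}r^*$, it would have been added to $\Gamma_l$. The paper then re-checks conditions (1)--(3) for the mapped set directly, rather than citing Theorem~\ref{thm:the property of sqrt3r-independent set}. Your explicit step (the minimum-radius enumerated candidate is at least as good as that $\Gamma$, so it inherits all three conditions even if it differs from it) is the part the paper defers to the proof of Theorem~\ref{thm:paramenterbest}.
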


\begin{proof}
    First, we show the algorithm can always output a $\Gamma$, which is, there always exists $c\in \Gamma_l$ with $d(c,c^*)\leq \sqrt{3}r^{*}$ when $c^{*}\notin \bigcup_{l=1}^{m} \Gamma_l$ and $c^{*} \in S_l$. Suppose otherwise, then $d(c^{*},c)> \sqrt{3}r^{*}$ holds for every $c\in \Gamma_l$, which indicates that $c^{*}$ must be added to $ \Gamma_l$ and arises a contradiction.

Then, we show that all three conditions hold. Cond.~(1) immediately holds following the construction of $\Gamma$.

For Cond.~(2), we add at most one point (i.e., $c$ or $c^*$) to $\Gamma$ upon one point $c^*\in C^*$ (note that the point to be added might already exist in  $\Gamma$). So $|\Gamma|\leq |C^*|= k$ and hence condition (2) is true. 
 For condition (3), the algorithm adds a point (either $c$ or $c^*$) of $S_l$ to $\Gamma$ if and only if the processing point $c^*$ belongs to $S_l$. That is, we have $|\Gamma\cap S_l| \leq  |C^*\cap S_l|\leq k_l$ holds for each $l$. This completes the proof.
\end{proof}

\begin{theorem}\label{thm:paramenterbest}
    The fair $k$-center problem admits a parameterized approximation algorithm with ratio $1+\sqrt{3}\approx 2.732$ and runtime $O(2^{k(1+\log m)}\cdot mnk^2\log n)$.
\end{theorem}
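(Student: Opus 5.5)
The plan is to combine the correctness guarantee from Lemma~\ref{lem:algosqrt3r-independentset} with a runtime count for the two stages of the enumeration algorithm.

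\textbf{Approximation ratio.} By Thm.~\ref{thm:the property of sqrt3r-independent set} and Lemma~\ref{lem:algosqrt3r-independentset}, the union $\bigcup_{l=1}^m \Gamma_l$ contains a feasible fair center set $\Gamma$ whose radius is at most $(1+\sqrt{3})r^*$. Stage~2 enumerates every fair subset of size at most $k$ and returns the one of smallest radius. Hence the output has radius at most that of $\Gamma$, which is at most $(1+\sqrt{3})r^*$.

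Since $r^*$ is unknown, I would run the procedure for each candidate value of $r^*$ among the $O(n^2)$ pairwise distances. Binary search over these candidates costs an $O(\log n)$ factor, which accounts for the $\log n$ term. For a given guess, if Stage~2 finds no fair set of radius at most $(1+\sqrt{3})$ times the guess, the guess is too small and we move up. Monotonicity holds because any guess $\geq r^*$ succeeds by the existence argument.

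\textbf{Runtime.} There are three pieces.

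\emph{Building the $\Gamma_l$.} Each $\Gamma_l$ is built greedily: scan the points of $S_l$ and add a point if it lies farther than $\lambda$ from all current members. This costs $O(|S_l|\cdot|\Gamma_l|)$. When the guess is at least $r^*$, Lemma~\ref{lem:sqrt3d-independent center set} applied to $S_l$ gives $|\Gamma_l|\leq 2k$. The bound needs care, because $S_l$ is covered by the balls of $C^*$, and every center whose ball meets $S_l$ is among the $k$ optimal centers, so each such ball holds at most two points of $\Gamma_l$. If some $\Gamma_l$ grows past $2k$ during the scan, we abort and reject the guess. So this stage costs $O(nk)$ per group and $O(mnk)$ overall.

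\emph{Number of candidate sets.} I expect this count to be the main obstacle, since it must match $2^{k(1+\log m)}=(2m)^k$. The naive bound $\binom{|\bigcup_l\Gamma_l|}{\leq k}$ with $|\bigcup_l \Gamma_l|\leq 2mk$ gives roughly $(2em)^k$, which is off by a constant base. To fix this, I would enumerate ordered choices instead. Each of the $k$ center slots is filled by first choosing the group index, giving $m$ options, and then a within-group choice. A tighter encoding for the within-group choice is to note that each optimal center $c^*$ is mapped to a point of $\Gamma_l$ within $\sqrt{3}r^*$ of it. One could then argue that for each optimal center there are at most $2$ relevant candidates, namely the at most two $\Gamma$-points in its $r^*$-ball or a nearest substitute.

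If that argument does not go through cleanly, I would settle for $(2mk)^k$ or $\binom{2mk}{k}$ and absorb the difference into the stated bound as written, $(2m)^k$ up to $k^{O(1)}$ factors. I would present it as: there are at most $2m$ choices per slot, over $k$ slots.

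\emph{Evaluating each candidate.} Computing the radius of a candidate set $C$ with $|C|\leq k$ takes $O(nk)$ time. Checking fairness costs $O(k)$.

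Multiplying the three pieces gives $(2m)^k\cdot O(nk)$ per guess, plus $O(mnk)$ for the construction. Including the extra $mk$ bookkeeping factor for group lookups and the $O(\log n)$ binary search yields $O(2^{k(1+\log m)}\cdot mnk^2\log n)$.
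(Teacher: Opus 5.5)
Your treatment of the approximation ratio, the binary search over the $O(n^2)$ candidate radii, and the greedy construction of the $\Gamma_l$ agrees with the paper. Your abort when some $|\Gamma_l|$ exceeds $2k$ is a genuine improvement, since $|\Gamma_l|\le 2k$ is only guaranteed for guesses at least $r^*$. The gap is the count of candidate sets, and neither of your repairs closes it.

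The ``two relevant candidates per optimal center'' idea cannot drive an enumeration, for three reasons. First, the algorithm does not know $C^*$. Second, an optimal center $c^*\in S_l$ need not have any point of $\Gamma_l$ in its $r^*$-ball; the substitute is only guaranteed within $\sqrt{3}r^*$. Third, if you branch on an uncovered point $p$ and a group $l$, the candidates for the substitute are the points of $\Gamma_l$ within $(1+\sqrt{3})r^*$ of $p$. Their number is governed by a packing bound in $\mathbb{R}^d$ that grows with $d$, not by $2$. So a slot really has $m$ group choices times up to $2k$ points, giving $(2mk)^k$ ordered choices.

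Your fallback is false as arithmetic. First, $(2mk)^k=(2m)^k\,k^k$. Second, by the entropy lower bound, $\binom{2mk}{k}\ge 2^{k\log(2m)+k(2m-1)\log\frac{2m}{2m-1}}/(2mk+1)\ge (4m)^k/(2mk+1)$, using $(2m-1)\log\frac{2m}{2m-1}\ge 1$ for $m\ge 1$. Either way the excess over $(2m)^k$ is $2^{\Omega(k)}$ and cannot be hidden in $k^{O(1)}$ factors. The closing ``at most $2m$ choices per slot'' and the extra ``$mk$ bookkeeping factor'' are chosen to match the target rather than derived.

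Your suspicion is nonetheless correct, and it exposes a flaw in the paper's own argument. The paper takes exactly the route you first considered: $|\bigcup_l\Gamma_l|\le 2mk$, hence $O(nk\binom{2mk}{k})$ for Stage~2, followed by Lem.~\ref{lem:uppertbinom}, which claims $\binom{2mk}{k}=O(2^{k(1+\log m)}\cdot mk)$. That lemma is false. Its Stirling computation correctly obtains $\log\binom{2mk}{k}=k\log(2m)+k(2m-1)\bigl(\log(2m)-\log(2m-1)\bigr)+O(\log(mk))$. It also correctly bounds $(2m-1)\bigl(\log(2m)-\log(2m-1)\bigr)\le 1/\ln 2$. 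But it then adds $1/\ln 2$ instead of $k/\ln 2$. Already for $m=1$, $\binom{2k}{k}\ge 4^k/(2k+1)$ is not $O(2^k k)$.

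What this algorithm actually achieves is $O\bigl(\binom{2mk}{k}\,nk\log n\bigr)\subseteq O\bigl((2em)^k\,nk\log n\bigr)$, an exponent of $k(1+\log m+\log e)$. Reaching the stated $2^{k(1+\log m)}$ would require a genuinely smaller search space, for instance a branching rule with $O(m)$ options per step. Neither your proposal nor the paper provides one, so the runtime part of the theorem is not established as written.
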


\begin{proof}
    The ratio can be directly derived from Thm. \ref{thm:the property of sqrt3r-independent set}. That is because the enumeration can always find a $\Gamma$ satisfying the three conditions as in Thm. \ref{thm:the property of sqrt3r-independent set} if such $\Gamma$ exists; while on the other hand, Thm. \ref{thm:the property of sqrt3r-independent set} claims the existence of such  $\Gamma$.  

For the runtime, Stage 2 enumerates $O\left(\tbinom{2mk}{k}\right)$ sets (of $k$-centers), because there are at most $2mk$ points in $\bigcup_{l\in [m]}\Gamma_l$. Moreover, each enumerated set requires $O(nk)$ time to compute its maximum radius, so Stage 2 takes a total runtime $O\left(nk\cdot\tbinom{2mk}{k}\right)$. By  employing Stirling's approximation  formulation and via calculation, we get: 
\begin{lemma}\label{lem:uppertbinom}
    $\tbinom{2mk}{k}$ is bounded by 
    $O\left(2^{k(1+\log m) }\cdot mk\right)$.
\end{lemma}
Then, the total runtime of the parameterized algorithm sums up to $O(2^{k(1+\log m)}\cdot mnk^2\log n)$. Then by multiplying the factor $O(\log n)$ for employing the binary search for not knowing $r^*$, we complete the proof of Thm.~\ref{thm:paramenterbest}. 
{The detailed proof of Lem.~\ref{lem:uppertbinom} can be found in the full version.}

\subsection{The Parameterized Approximation in Streams}
\label{subsec:approx-1_2sqrt3}
In this subsection, we introduce the general framework of our parameterized streaming algorithm, there are two stages:

\begin{itemize}[leftmargin=1.4em]
    \item [1)] \textbf{Streaming Stage}. Construct a $\lambda$-independent center set $\Gamma_{l}$ for each group $S_l$ {\it{upon the stream}} with $\lambda=\sqrt{3}r^*$.

\item [2)] \textbf{Post-streaming Stage}. Select centers from $\bigcup_{l=1}^{m} \Gamma_l$ to construct the desired center set $\Gamma$ by enumeration similar to the offline parameterized approximation algorithm.
\end{itemize}

Because the enumeration here considers only the points in $\bigcup_{l}\Gamma_l$ rather than in the entire point set $S$,
its compromised performance guarantee can be stated as below:

\begin{lemma}\label{lem:1plus2sqrt3}
    The parameterized streaming algorithm consumes a memory  of $O(k\log_{\epsilon} \alpha)$ and achieves an approximation ratio $1+2\sqrt{3}+\epsilon\approx 4.464$ for any sufficiently small $\epsilon>0$, where $\alpha = \Delta / \delta$ is the \emph{aspect ratio} for $\Delta = \max_{p,q \in S} d(p,q)$ denoting the diameter of $S$ and  $\delta = \min_{p,q \in S,\,p \neq q} d(p,q)$ being the minimum pairwise distance. 
\end{lemma}

\begin{proof}
        We need only to bound the distance from any point $s\in S$ to the center set $C$ constructed by the parameterized algorithm. For each input point $s\in S_l$, there must exist a point $i\in \Gamma_l$ such that $d(i,s)\leq \sqrt{3}r^*$ according to the structure for $\lambda$-independent center set. If $i$ is  added to $C$, then $d(s,C)\leq \sqrt{3}r^*$ immediately holds. Otherwise, we have $i\notin C$ and will show $d(s,C)\leq (1+2\sqrt{3})r^*$ holds. 
    Following Thm.~\ref{thm:the property of sqrt3r-independent set}, there exists $\Gamma\subseteq \bigcup_l \Gamma_l$, such that for any $i\in \bigcup_l \Gamma_l$, $d(i,\Gamma)\leq (1+\sqrt{3})r^*$ holds.  As   $C$ is constructed via enumeration, we have 
    \[\max_{i\in  \bigcup_l \Gamma_l} d(i,C) 
    \leq \max_{i\in  \bigcup_l \Gamma_l } d(i,\Gamma)\leq (1+\sqrt{3})r^*.\]
    Therefore, we have 
    \[d(s,C)\leq d(s,i)+\underset{i\in  \bigcup_l \Gamma_l }{\max}d(i,C)\leq (1+2\sqrt{3})r^*,\] 
    
    This completes the proof. 
\end{proof}

Note that, by using the smaller value $\lambda=\sqrt{2}r^*$, the parameterized approximation ratio in Thm.~\ref{thm:paramenterbest} and the ratio in Lem.~\ref{lem:1plus2sqrt3} can be improved to $1+\sqrt{2}\approx 2.414$ and $1+2\sqrt{2}+\epsilon\approx 3.828$, respectively.

\section{Streaming Euclidean Fair $k$-Center in  Polynomial Runtime}
\label{sec:edu_fair}
In this section, we present a one-pass streaming algorithm for  Euclidean space, where the algorithm employs network flow as a building block, achieving a polynomial runtime and a ratio of $3+\sqrt3+\epsilon\approx 4.732$.

\begin{algorithm}[!t]
 \small

\caption{The post-streaming algorithm }
\label{alg:jones_based_stream}

\KwIn{A $\lambda$-independent center set $\Gamma$  with $\lambda = 2r^*$ and  $\Gamma_{i}$ ($i\in[m]$) with $\lambda = \sqrt{3}r^*$.}
\KwOut{A center set $C$.}

Create a directed graph $G(V,E)$ with $V = \{s, t\}$ and $E = \emptyset$\;

Create vertex set $V_{\Gamma}$ with one-to-one mapping to points in $\Gamma$\;
Create a vertex set $V_f$ with a one-to-one mapping to points in $\Gamma_1, \dots, \Gamma_m$\;
Create a vertex set $V_t$ of size $m$ with a one-to-one mapping to group indices\;
Update $V \gets V \cup V_{\Gamma} \cup V_f \cup V_t$\;

\For{each group $l \in [m]$}{
    Add an edge from the vertex corresponding to $l$ in $V_t$ to $t$ with capacity $k_l$\;
}

\For{each $a \in \Gamma$ and each $f \in V_f$ corresponding to a point in $\bigcup_{l \in [m]} \Gamma_l$}{
    \If{$d(a, f) \leq (1 + \sqrt{3}) r^*$}{
        Add an edge from vertex $a \in V_{\Gamma}$ to vertex $f \in V_f$ with capacity $1$\;
        Add an edge from vertex $f \in V_f$ to the vertex $l \in V_t$ (where $f \in \Gamma_l$) with capacity $1$\;
    }
}

\For{each $a \in \Gamma$}{
    Add an edge from $s$ to the vertex for $a \in V_{\Gamma}$ with capacity $1$\;
}

Run Dinic's Max-Flow algorithm~\cite{Shimon1975Network} on $G$ from $s$ to $t$\;

\eIf{the max flow value equals $|\Gamma|$}{
    Construct $C$ from the centers corresponding to edges used in the flow\;
    \Return $C$\;
}{
    \Return \textit{infeasible}.
}
\end{algorithm}

The key idea of our algorithm is first to construct a $\lambda$-independent center set $\Gamma$ for $S$ with $\lambda=2r^*$(ignoring groups) and for each group $l$, construct $\lambda$-independent center set $\Gamma_{l}$ with a different $\lambda=\sqrt{3}r^*$ along the stream simultaneously; and then use the $\Gamma$ as the solution except replacing some centers therein using $\bigcup_l{\Gamma_l}$ to satisfy the fairness constraint.  

In general, our algorithm mainly proceeds in two phases:

 (1) Upon the stream, construct $m+1$ independent center sets, including $m$ independent center sets $\Gamma_{l}$ for each group $S_{l}$ ($l\in[m]$) regarding $\lambda= \sqrt{3}r^*$ and an independent center set $\Gamma$ for $\lambda= 2r^*$ ignoring the group division; 
 
 (2) Select at most $k$ centers from the computed $\bigcup_l{\Gamma_l} \cup \Gamma$: Construct an auxiliary bipartite graph $G$ where a maximal flow corresponds to a center set that can cover all the data points within the desired radius $(3+\sqrt{3})r^*$.

The auxiliary graph $G=(V;E)$ for Phase (2) can be constructed as follows: 
 \begin{itemize}[leftmargin=1.4em]
\item [1)] For the vertices of $G$, set $V=\{s,t\}\cup V_{\Gamma}\cup V_f\cup V_t$ where $\{s,t\}$ is the set of source and target nodes, $V_{\Gamma}$ is a set with a one-to-one mapping to points in $\Gamma$, $V_f$ is a set with a one-to-one mapping to points in $\bigcup\Gamma_l$ and $V_t$ is a set with a one-to-one mapping to group indices. 
\item [2)] We add four kinds of edges of $G$ as follows:
\begin{itemize}[leftmargin=1em]

\item For each $a \in V_{\Gamma}$, we add an edge from the source node $s$ to the vertex $a$ with capacity $1$.

\item For each $a \in V_{\Gamma}$ and each $f \in V_f$,  
    if $d(a, f) \leq (1 + \sqrt{3}) r^*$, we add an edge from vertex $a$ to vertex $f$ with capacity $1$.

 \item For each $f \in V_f$, we add an edge from vertex $f$ to the vertex $l \in V_t$ (where $f \in \Gamma_l$) with capacity $1$.
  
  \item For each group $l \in [m]$, we add an edge from the vertex corresponding to $l$ in $V_t$ to the target node $t$ with capacity $k_l$.
\end{itemize}
\end{itemize}

The detailed algorithm is as depicted in Alg.~\ref{alg:jones_based_stream} and an example is in Fig.~\ref{fig:the matching}.

\begin{figure}[t]
    \centering
    \includegraphics[width=0.4\textwidth]{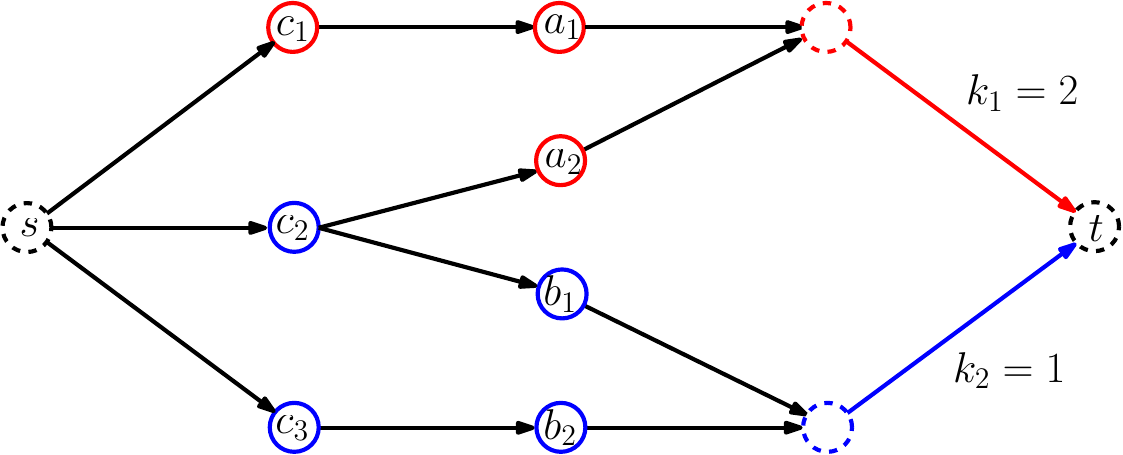}
    \caption{An example of executing Alg.~\ref{alg:jones_based_stream} with $k=3$, $k_1=2,k_2=1$. Let  $\Gamma= \{c_1, c_2, c_3\}$ be a $\lambda$-independent center set for $S$ ignoring the groups for $\lambda=2r^*$. Assume that $\Gamma_1=\{a_1, a_2\}, \Gamma_2=\{b_1, b_2\}$ are for $S_1, S_2$ respectively with a different $\lambda=\sqrt{3}r^*$. Each edge has a default capacity of $1$ except the edges entering $t$.}
    \label{fig:the matching}
\end{figure}

\begin{lemma}
\label{lem:the size of gamma is less than k}
    In Alg.~\ref{alg:jones_based_stream}, two conditions hold: (1) The size of each $\Gamma_{l},l\in [m]$, is less than $2k$. (2) The size of $\Gamma$ is less than $k$. 
\end{lemma}

\begin{proof}
    According to Lem.~\ref{lem:sqrt3d-independent center set}, 
    $|\Gamma|\leq k$ is true and $|\Gamma_l|\leq 2k$ holds for each $l\in[m]$. 
\end{proof}

\begin{figure*}[t]
  \centering
  \includegraphics[width=0.96\textwidth]{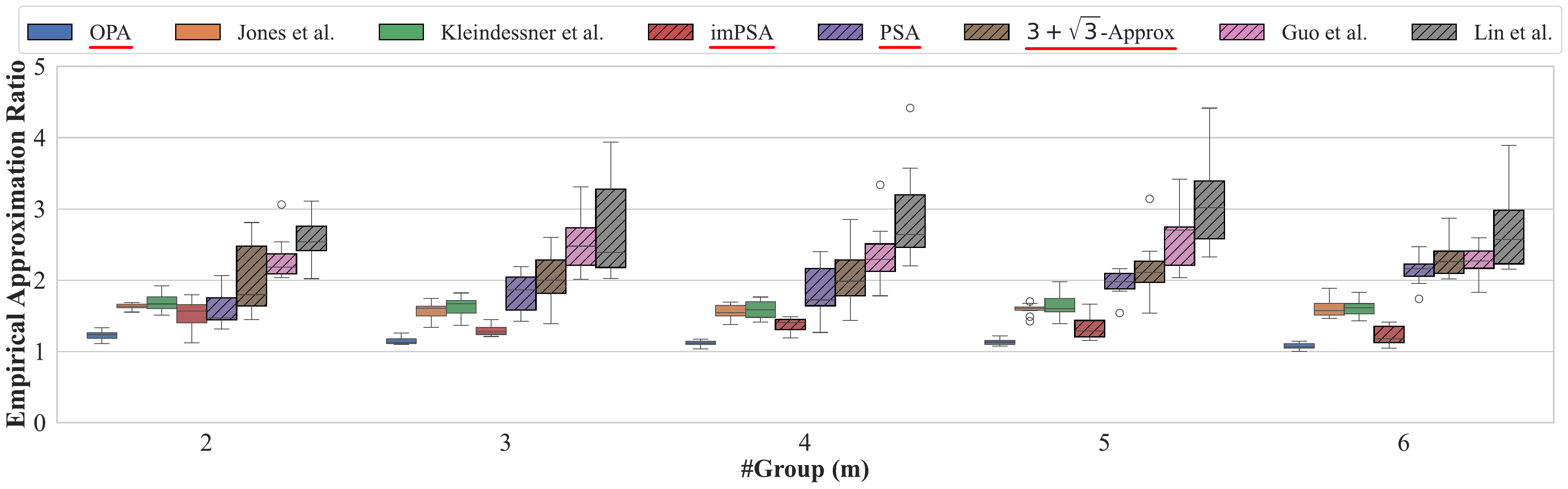}  
  \caption{Empirical approximation ratio ($\mathrm{cost}/r^*$) of our algorithms (indicated by a red underline) compared with other baselines. (Solid colors indicate the offline algorithms' result, whereas hatched boxes represent the results of streaming algorithms.)}
  \label{fig:simu_approx}
\end{figure*}

\begin{lemma}
\label{lem:the property of 1+sqrt3}
    In Alg.~\ref{alg:jones_based_stream}, for point $p\in \Gamma$, if $\forall i \in \Gamma_{l}, d(p,i)>(1+\sqrt{3})r^*$ holds where $l\in [m]$, there exists no point $q\in S_{l}$ such that $d(p,q)\leq r^*$.  
\end{lemma}

\begin{proof}
    Suppose that there exists a point $q\in S_{l}$ such that $d(p,q)\leq r^*$. Then, there must exist a point $i\in \Gamma_l$ such that $d(q,i)\leq \sqrt{3}r^*$. We have $d(p,i)\leq d(p,q)+d(q,i)\leq r^*+\sqrt{3}r^*=(1+\sqrt{3})r^*$, contradicting $d(p,i)>(1+\sqrt{3})r^*$.
\end{proof}

\begin{lemma}
\label{lem:feasible solution}
    There exists a flow of value equal to $\vert\Gamma\vert $ in the auxiliary graph $G$ if and only if there exists a center set $C\subseteq \bigcup_l{\Gamma_l} \cup \Gamma$ satisfying the fairness requirement and covering all the points of $S$ within a radius $(3+\sqrt{3})r^*$.
\end{lemma}

\begin{proof}
       
        If there exists a flow of value equal to $|\Gamma|$ in the auxiliary graph $G$, we need to prove $C$ satisfies: (1) $|C|\leq k$; (2) $|C\cap S_l|\leq k_l$; (3) $\forall s\in S, d(s,C)\leq (3+\sqrt{3})r^*$. 
        
        For $|C|\leq k$, the weight of edge from the vertex $a$ in $V_{\Gamma}$ to the vertex $f$ in $V_f$ is $1$, that is the exchange is one-to-one, when the $C$ created by the maximal flow algorithm, $|C|=|\Gamma|\leq k$ as $\Gamma$ is an $2r^*$-independent center set for $S$. 
        
        Then, for $|C\cap S_l|\leq k_l$, according to Lem.~\ref{lem:the property of 1+sqrt3}, there always exists a feasible flow and the weight of edge from the vertex for $l$ in $V_l$ to $t$ is $k_l$, so $|C\cap S_l|\leq k_l$. 
        
        For $\forall s\in S, d(s,C)\leq (3+\sqrt{3})r^*$, there are two cases: (1) for each point $s\in S$, there exists a point $i\in \Gamma$ such that $d(s,i)\leq 2r^*$ and $i$ is added to $C$, we have $d(s,C)\leq d(s,i)\leq 2r^*$; (2) for each point $s\in S$, there exists a point $i\in \Gamma$ such that $d(s,i)\leq 2r^*$, however, a point $j\in \Gamma_{l}, l\in[m]$ replace $i$ to be added to $C$ as $d(i,j)\leq (1+\sqrt{3})r^*$, then $d(s,C)\leq d(s,j)\leq d(s,i)+d(i,j)\leq 2r^*+(1+\sqrt{3})r^*=(3+\sqrt{3})r^*$. That is, there exists a center set $C\subseteq \bigcup_l{\Gamma_l} \cup \Gamma$ satisfying the fairness requirement and covering all the points of $S$ within a radius $(3+\sqrt{3})r^*$. 

        If there exists a center set $C\subseteq \bigcup_l{\Gamma_l} \cup \Gamma$ satisfying the fairness requirement and covering all the points of $S$ within a radius $(3+\sqrt{3})r^*$. As $|C|=|\Gamma|\leq k$, there exists a flow of value equal to $|\Gamma|$ in the auxiliary graph $G$.
\end{proof}

\begin{theorem}
    For the center set $C$ created by Alg.~\ref{alg:jones_based_stream}, we have: (1) $|C\cap S_l|\leq k_l$; (2) $|C|\leq k$; (3) $d(s,C)\leq (3+\sqrt{3})r^*$ holds for each $s\in S$. In other word, Alg.~\ref{alg:jones_based_stream} achieves the approximation ratio $3+\sqrt{3}\approx 4.732$ for the streaming fair $k$-center problem for general $m$.  
\end{theorem}

\begin{proof}
    For Cond.~(1), as the construction for graph $G$, the weight of the edge from the vertex for $l$ in $V_l$ to $t$ is $k_l$, the center set $C$ created by Alg.~\ref{alg:jones_based_stream} satisfies $|C\cap S_l|\leq k_l$.

    For Cond.~(2), first, $|\Gamma|\leq k$ holds according to lem.~\ref{lem:the size of gamma is less than k}, then, the weight of the edge from the vertex $a\in \Gamma$ to the vertex $f$ in $V_f$ is $1$ and the weight of the edge from the vertex $f$ in $V_f$ to the vertex $l$ in $V_l$ is $1$. According to the conservation law of flow in the maximum flow algorithm, $|C|\leq k$ holds.

    For Cond.~(3), we bound the distance from any point $s\in S$ to $C$ created by Alg.~\ref{alg:jones_based_stream}. For each point $s\in S$, there are two cases: (1) There exists a point $i\in \Gamma$ such that $d(s,i)\leq 2r^*$, and $i$ is added into $C$ as a center, thus $d(s,C)\leq 2r^*$; (2) There exists a point $i\in \Gamma$ such that $d(s,i)\leq 2r^*$, then there exists a point $j\in \Gamma_l, l\in[m]$, $d(i,j)\leq (1+\sqrt{3})r^*$ and $i$ is replaced by $j$ to add to center set $C$, we have $d(s,C)\leq d(s,j)\leq d(s,i)+d(i,j)\leq 2r^*+(1+\sqrt{3})r^*=(3+\sqrt{3})r^*$. 
\end{proof}

Following the above theorem, and accounting for the cost of guessing \(r^*\) over the stream, we eventually obtain a streaming algorithm with an approximation ratio  \((3+\sqrt{3}+\epsilon)\) and a memory complexity $k\log_{\epsilon} \alpha$  by running Alg.~\ref{alg:jones_based_stream} for each guessed value of \(r^*\) with binary search. Moreover, by using the smaller threshold \(\lambda=\sqrt{2}r^*\) to generate \(\Gamma_l\) during streaming and modifying Alg.~\ref{alg:jones_based_stream} accordingly via replacing \(\sqrt{3}r^*\) with \(\sqrt{2}r^*\), the approximation ratio can be improved to $4.42$, which slightly improves upon the previous state-of-the-art ratio of \(4.464\)~\cite{guo2026fair}.

\section{Experimental Results}
\label{sec:experiment}
In this section, we  evaluate our algorithms on both simulated and real-world datasets, comparing them to three approximation algorithms as baselines. All experiments were averaged over multiple runs, implemented in Python 3.8, and executed on a 12th Gen Intel(R) Core(TM) i9 with 64 GB of RAM.\footnote{Our code can be found in GitHub~\url{https://github.com/ChaoqiJia/FPT_EuclideanFairk-Center}.}

\subsection{Experimental Setting}
\paragraph{Datasets.} Following the approach in previous work~\cite{kleindessner2019fair}, we used their method to construct a simulated dataset with a known optimal solution for the $k$-center problem. In addition, we apply our algorithms to three real-world datasets from UCI~\cite{asuncion2007uci}: Wholesale, Student and Adult. Following the previous work~\cite{jones2020fair,chen2019proportionally,guo2026improved}, we utilized numeric features for clustering and selected multiple categorical attributes to construct datasets with the fair requirement.

\paragraph{Algorithms.}

To evaluate our offline parameterized algorithm, which achieves a $(1+\sqrt{3})$-approximation ratio (OPA; see Section~\ref{subsec:off_approx}), we compare it against two baseline methods: the $5$-approximation algorithm~\cite{kleindessner2019fair} and the $3$-approximation algorithm~\cite{jones2020fair}.

To assess the clustering quality of our online algorithms for the fair $k$-center problem, we implement a $(1+2\sqrt{3}+\epsilon)$-approximation parameterized streaming algorithm (PSA; see Section~\ref{subsec:approx-1_2sqrt3}) and an improved algorithm with $1+2\sqrt{2}+\epsilon$ ratio (imPSA), and a $(3+\sqrt{3}+\epsilon)$-approximation algorithm with polynomial time (denoted as $(3+\sqrt{3})$-approx; see Section~\ref{sec:edu_fair}). We compare these approaches with two online baselines: the $(7+\epsilon)$-approximation algorithm~\cite{lin2024streaming} and the $(5+\epsilon)$-approximation algorithm~\cite{guo2026improved}.

\paragraph{Constraints Settings.}
Following the fairness principle of disparate impact as outlined by~\citeauthor{feldman2015certifying}, we restricted the selection to $k_l$ data points from the $l$th group to serve as centers. We then evaluated the clustering quality by varying the parameter $m$ and the fair ratio across these datasets.

\paragraph{Metrics.}
We use the \textit{cost} metric, as defined in Section~\ref{sec:pre}, to compare the quality of clustering across the datasets based on their average values. Additionally, we measure runtime in seconds.

\begin{table*}[t]
\centering
\small
\resizebox{0.88\textwidth}{!}{
\begin{tabular}{l|l|cccccc}
\toprule
\textbf{Algorithms} & \textbf{Approx. Ratio} & \textbf{A-Gender} & \textbf{A-Race} & \textbf{S-Address} & \textbf{S-School} & \textbf{S-Sex}  & \textbf{W-Location} \\
\midrule
\textbf{OPA} & $1+\sqrt{3}$ & -- & -- & \textbf{1.356} & \textbf{1.376} & \textbf{1.349} & \textbf{0.633} \\
\cite{jones2020fair} & $3$  & 0.212 & 0.232 & 1.726 & 1.668 & 1.663 & 0.749 \\
\cite{kleindessner2019fair} & $5$ & 0.226 & 0.226 & 1.702 & 1.578 & 1.834 & 0.749 \\
\midrule
\textbf{imPSA} & $1+2\sqrt{2}+\epsilon$ & - & -  & \textbf{1.502} & \textbf{1.623} & \textbf{1.503} & \textbf{0.595} \\
\textbf{PSA}& $1+2\sqrt{3}+\epsilon$  & 0.243 & 0.243 & {1.568} & {1.779} & 1.633 & {0.632} \\

\textbf{$3+\sqrt{3}$-Approx}  & $3+\sqrt{3}+\epsilon$ & \textbf{0.234} & \textbf{0.229} & 1.684 & 1.832 & {1.540} & 0.812 \\
\cite{guo2026improved}& $5+\epsilon$   & 0.307 & 0.296 & 1.834 & 1.788 & 1.834 & 0.844 \\
\cite{lin2024streaming}& $7+\epsilon$  & 0.310 & 0.701 & 1.823 & 1.788 & 1.834 & 0.911 \\
\bottomrule
\end{tabular}
}
\caption{Evaluation of clustering cost for fair 
$k$-center on real-world datasets.}
\label{tab:realdata}
\end{table*}

\subsection{Experimental Analysis}
\label{subsec:Exper}
In this subsection, we analyze the approximation ratios and running times of all algorithms on a small simulated dataset by varying the number of groups $m$, with focus on our proposed offline and streaming methods: $(3+\sqrt{3})$-Approx, OPA, PSA, and imPSA. We then compare the algorithms on three real-world datasets in terms of clustering cost. Finally, we evaluate how the clustering cost varies under different fairness requirement ratios.

\paragraph{Approximation Factor.}
We compare algorithm performance by evaluating the relative solution ratio against the provided optimal radius $r^*$ on the simulated dataset. The ratio of the evaluation result can be referred to as \textit{empirical approximation ratio}, and the maximum value represents the worst-case. Our target in this experiment is to validate the approximation factors achieved by our algorithms ($(3+\sqrt{3}+\epsilon)$-Approx, OPA, PSA and imPSA). In Figure~\ref{fig:simu_approx}, we compared our algorithms with the baselines in the settings of $|S| = 200$, $k=9$, and increased the number of groups $m$ from $2$ to $6$. 

Across all values of the number of groups $m$, the offline methods (solid-color boxes) consistently achieve lower empirical approximation ratios than the streaming methods (hatched boxes). This holds even though some offline baselines (e.g., Kleindessner et al.) have weaker theoretical guarantees than PSA ($(1+2\sqrt{3}+\epsilon)$-approx) and $(3+\sqrt{3}+\epsilon)$-approx, while only imPSA breaks the observation. We attribute the stronger empirical performance of the offline methods to their access to the full dataset, whereas streaming algorithms must operate under strict memory constraints and can only exploit a limited subset of data points. This interpretation is further supported by imPSA, which uses more memory than the other streaming algorithms and correspondingly achieves better empirical performance. In addition, our results report empirical performance averaged over $10$ runs on $10$ simulated datasets for each group size $m$; the worst-case instances that drive the theoretical approximation guarantees may not arise in our experiments.

When we analyze the offline and streaming results separately, the empirical approximation ratios are consistent with the theoretical guarantees: algorithms with smaller approximation factors tend to yield smaller observed ratios, and the results remain well within the corresponding worst-case bounds. This agreement indicates strong alignment between our empirical findings and theoretical analysis. Moreover, the experimental results exhibit a similar trend as we vary the number of groups. This suggests that the empirical performance observed for $m=2$ generalizes well to larger values of $m$.

The parameterized methods (i.e., OPA, PSA and imPSA) generally achieve better empirical approximation ratios compared with the baselines. In particular, OPA ($1+\sqrt{3}$-Approx) performs best among the offline algorithms, while imPSA ($1+2\sqrt{2}$-Approx) typically outperforms the generic streaming baselines (\citeauthor{guo2026improved,lin2024streaming}). We reason that the parameterized algorithm spent more time and more memory space optimizing the selection of centers.

\paragraph{Clustering on Real World Datasets.}

In Table~\ref{tab:realdata}, we evaluate clustering cost on three real-world datasets under different fairness group settings, setting the number of centers to $k = 1\%$ of the dataset size. Under this setting, the \textit{Adult} yields a relatively large $k$ (about $300$), whereas the \textit{Student} and \textit{Wholesale} datasets have $k$ less than $10$. Consistent with the simulated data, these algorithms exhibit similar empirical trends across datasets. We attribute the strong performance of our methods in part to the larger pool of available candidate centers. However, we also show that the optimal solution restricted to the candidate center set need not coincide with the best performance on the full dataset, especially when $k$ is large. For instance, $(3+\sqrt{3})$-Approx occasionally achieves slightly lower cost than PSA on \textit{Adult}.

\begin{figure}[t]
  \centering
  \includegraphics[width=0.86\linewidth]{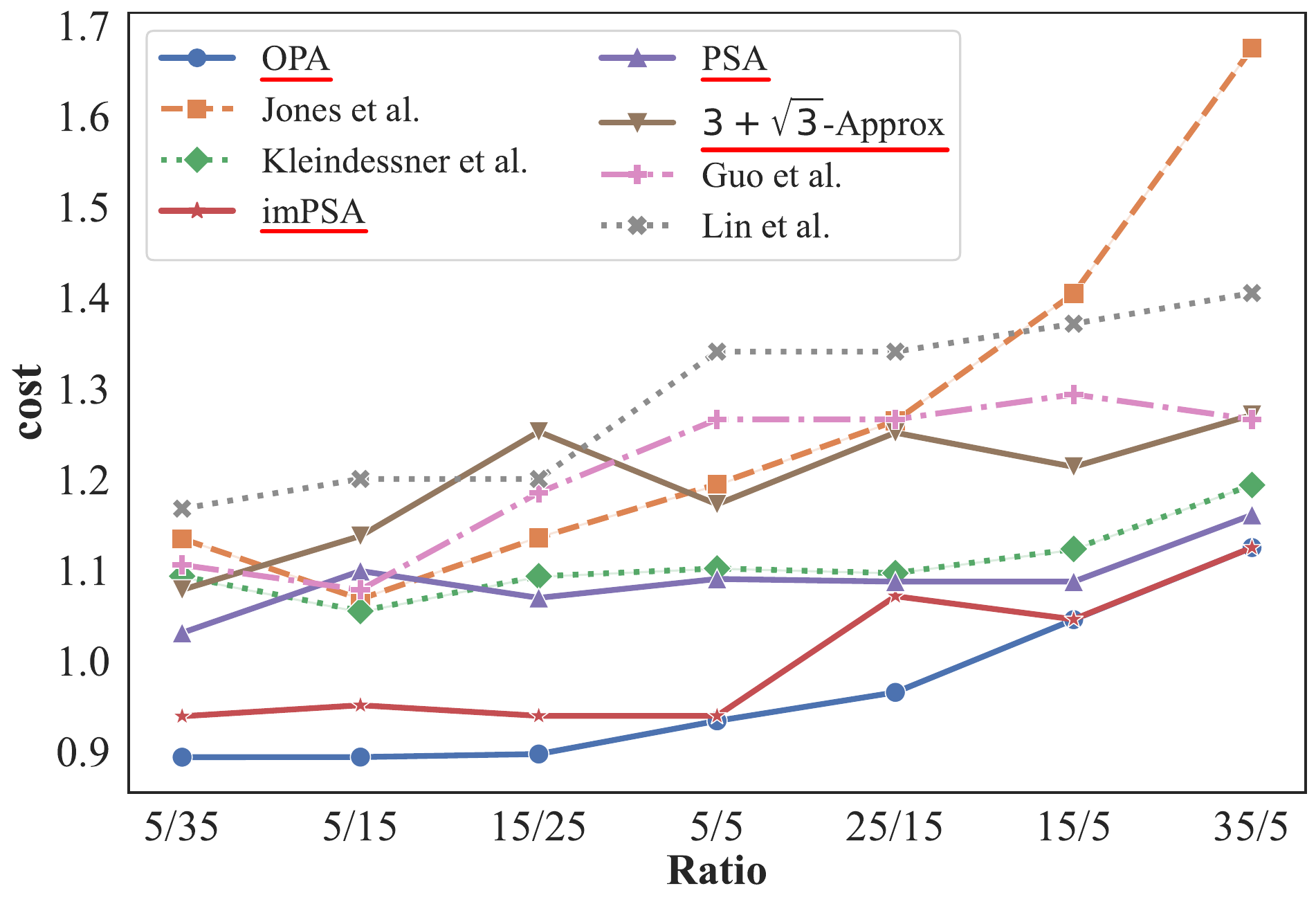} 
  \caption{Cost of our algorithms (indicated by a red underline) compared with other baselines with the different balanced fair center size.}
  \label{fig:robust}
\end{figure}

\paragraph{Robustness.}
In Figure~\ref{fig:robust}, we vary the fairness constraint ratio on the \textit{Student} dataset using the \textit{school} attribute. We start from the ratio induced by the data distribution when selecting $10\%$ of the points as centers, which corresponds to a constraint range of $5/35$ in this dataset, and then gradually adjust the ratio until it reaches $35/5$.

In Figure~\ref{fig:robust}, we observe that all algorithms are affected by the fairness ratio: the clustering cost increases as the ratio becomes more unbalanced. We attribute this trend to the fact that our methods first construct a candidate center set without using group labels. Thus, when the required center ratio aligns with the dataset's natural group distribution, the candidate set is more likely to cover points from each group in a way that satisfies the fairness constraints, which is also representative of many real-world settings. In contrast, this experiment considers deliberately mismatched ratios that deviate from the data distribution. In this case, the cost increases, but the relative ranking of the algorithms remains similar.

\section{Conclusion}

 In this paper, we first propose a parameterized approximation algorithm with a ratio of $1+\sqrt{3}\approx2.732$ for the offline Euclidean fair $k$-center clustering problem. By applying this algorithm in the post-streaming processing stage, we obtain a streaming algorithm with a ratio of $1+2\sqrt{3}+\epsilon\approx4.464$. These ratios can be further improved to $1+\sqrt{2}\approx2.414$ and $1+2\sqrt{2}+\epsilon\approx3.828$, respectively. To ensure polynomial running time, we devise another one-pass streaming algorithm that incorporates network flow techniques into the milestone algorithm of~\cite{jones2020fair}, achieving an approximation ratio of $3+\sqrt{3}+\epsilon\approx4.732$. This ratio can be further improved to $4.42$, improving upon the previous state-of-the-art ratio $4.464$ in Euclidean space due to~\citeauthor{guo2026fair}.

\clearpage

\section*{Acknowledgments}

This work is supported by National Natural Science Foundation of China (No. 12271098) and Key Project of the Natural Science Foundation of Fujian Province (No. 2025J02011). 

\bibliographystyle{named}
\bibliography{ijcai26}

\clearpage
\appendix

\section{Proof of Lem.~\ref{lem:uppertbinom}}
 By Stirling's approximation 
$\log(n!)=n\log n-n\log e+O(\log n)$, we have
\[
\log\ensuremath{\tbinom{2mk}{k}}=\log\frac{(2mk)!}{k!((2m-1)k)!}.
\]
That is,
\begin{align*}
\log\ensuremath{\tbinom{2mk}{k}} & = &  & \log((2mk)!)-\log(k!((2m-1)k)!)\\
 & = &  & k\cdot(2m-1)(\log(2m)-\log(2m-1))\\
 &  &  & +k\log2m+ O(\log(2mk))
\end{align*}

For bounding the term $(2m-1)(\log(2m)-\log(2m-1))$ of the above
inequality, we need only to show that the following always holds for any
$t\geq1$:

\begin{equation}
f(t)  =  \frac{1}{t}-(\ln(t+1)-\ln(t))\geq 0.\label{eq:ft-bound}
\end{equation}

By calculation, we have
\begin{eqnarray*}
g(t) & = & e^{\frac{1}{t}}-e^{\ln(t+1)-\ln(t)}\\
 & \geq & \left(1+\frac{1}{t}+\frac{1}{2t^{2}}\right)-\left(1+\frac{1}{t}\right)\\
 & \geq & 0,
\end{eqnarray*}
where the first inequality is from Talor expansion. Then from $g(t)\ge 0$ as above,  we have $f^{\prime}(t)\geq0$ holds for any $t\geq1$, and consequently conclude that $f(t)\geq0$ is true for any $t\geq1$.

Hence,
\begin{eqnarray*}
& &(2m-1)(\log(2m)-\log(2m-1))  \\
&= & \frac{1}{\ln2}(2m-1)(\ln(2m)-\ln(2m-1))\\
&\le  & \frac{1}{\ln2},
\end{eqnarray*}
where the first inequality is from Eq.~(\ref{eq:ft-bound}).
Therefore, we have 
\[\log\ensuremath{\tbinom{2mk}{k}}\leq  \frac{1}{\ln2}+k(1+\log m) + O(\log(2mk)),\]
and hence
\[
\tbinom{2mk}{k}\leq2^{{\ln2}+k (1+\log m) +O(\log(2mk))}=O(2^{k(1+\log m)}\cdot mk).
\]
\end{proof}

\section{ILP}

Furthermore, we note that to exactly solve the fair $k$-center problem in Stage 2), it is more efficient and elegant to use the following ILP than enumeration:
\begin{align*}
\min &  &  & \eta\\
s.t. &  &  & x_{ij}\leq y_{i} &  & \forall i,j\in S\\
 &  &  & \sum_{i}y_{i}\leq k\\
 &  &  & \sum_{i\in S_{l}}y_{i}\le k_{l} &  & \forall l\in [m]\\
 &  &  & \sum_{i\in S}x_{ij}\geq1 &  & \forall j\in S\\
 &  &  & d_{ij}x_{ij}\leq\eta &  & \forall i,j\in S\\
 &  &  & x_{ij,}y_{i}\in\{0,1\}
\end{align*}
where $d_{ij}$ be the distance between $i$ and $j$, $y_{i}$ indicates whether point $i$ is selected as a center, and  $x_{ij}$ indicates whether point $j$ is assigned to center $i$.

\section{\texorpdfstring{Dealing with Unknown $r^*$}{Dealing with Unknown r*}}

Throughout this paper, we assume that the optimal radius $r^{*}$ is known. However, since the exact value of $r^*$ is unknown,  we introduce a method to find an equivalent value that serves as a suitable replacement. This approach follows the same line as the previous elegant algorithms proposed in several previous works, including \cite{charikar1997incremental,badanidiyuru2014streaming,matthew2008streaming}.

We start with the observation below:

\begin{lemma}
    Let $\Psi=\{d_{ij}\mid i,j\in S\} $ be the set of distances between any two points in $S$. Then, we have $r^*\in \Psi$. That means that the value of the optimal radius must be a distance between two points of $S$.
\end{lemma}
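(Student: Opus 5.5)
The plan is to argue directly from the definition of $r^*$ as the value of an optimal solution. By the problem formulation, $r^* = \max_{s\in S} d(s,C^*)$ for an optimal center set $C^*\subseteq S$. Both operations in this expression, the inner minimum over the finite set $C^*$ and the outer maximum over the finite set $S$, are attained. So we will exhibit a concrete pair of points of $S$ whose distance equals $r^*$.

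The steps are as follows.

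First, since $S$ is finite and nonempty, choose $s_0\in S$ attaining the outer maximum, so that $r^* = d(s_0, C^*)$.

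Second, we need the inner minimum to be taken over a nonempty set. Because $k\geq 1$ and the constraints only upper-bound the number of centers, we may assume $C^*\neq\emptyset$; if it were empty, $d(s,C^*)$ would be undefined or infinite, so any meaningful optimum has at least one center. Since $C^*$ is then finite and nonempty, choose $c_0\in C^*$ with $d(s_0,c_0)=\min_{c\in C^*} d(s_0,c) = d(s_0,C^*)$.

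Third, conclude $r^* = d(s_0,c_0)$ with $s_0,c_0\in S$, hence $r^*\in\Psi$. Note that $\Psi$ includes pairs with $i=j$, so the case $r^*=0$ (for instance when $s_0\in C^*$) is also covered. This matters when every point can serve as a center, e.g.\ $n\le k$ with compatible group bounds.

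There is essentially no obstacle here. The only points requiring care are the attainment of the min and max, which is guaranteed by finiteness, and the degenerate cases $C^*=\emptyset$ and $r^*=0$, which are handled as above. The lemma's use afterwards is that $\Psi$ has $O(n^2)$ candidate values: sorting them and binary searching with the feasibility test from Lem.~\ref{lem:algosqrt3r-independentset} adds only the $O(\log n)$ factor claimed in Thm.~\ref{thm:paramenterbest}. For that, one would also need monotonicity of the test in the guessed radius, but that is beyond the present statement.
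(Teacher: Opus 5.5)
Your proof is correct and is the reasoning the paper leaves implicit. The paper states this lemma as an observation without proof, and its only content is that the outer maximum over the finite set $S$ and the inner minimum over the finite nonempty set $C^*\subseteq S$ are both attained, so $r^*=d(s_0,c_0)$ for some $s_0,c_0\in S$. Your handling of the degenerate cases ($C^*=\emptyset$, and $r^*=0$ via the $i=j$ pairs in $\Psi$) is a sound addition, and your closing binary-search remark lies outside the statement.
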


Since searching through all the distances between points in a streaming model is not feasible, we use a modified version of the doubling algorithm by Charikar et al. \cite{charikar1997incremental}, which is used for incremental clustering.

For a given parameter $\varepsilon$, the key idea is to maintain a lower bound $L$ during the streaming algorithm and use $r_j=L\left(1+\varepsilon\right)^{j}$, $j\in \{1,\dots,t\}$ as a replacement for $r^*$. Initially, $L$ is the minimum distance among the first $k+1$ arrived points and may increase in later iterations.   The value of $t$ is given by  $t=\lceil -\frac{\log\varepsilon}{\log(1+\varepsilon)}\rceil$\footnote{By setting the value of $t$, we ensure that $r_1 \leq \varepsilon \cdot r_t $.}. Note that there are $O(t)$ possible replacements, as $j$ takes $t$ values. Thus, we have $t$ instances running in parallel, consuming $O(tk)=O(k\frac{1}{\varepsilon}\ln \frac{1}{\varepsilon})$ memory. For convenience, we use $C_j$ to denote the set of centers selected according to  $r_j$, and $\Phi=\bigcup_{j=1}^t C_j$. 

It remains to update $L$. The key idea is to ensure that the value of $r_1$ is sufficiently large, i.e. there are no more than $k$ centers in $C_1$ according to $r_1$. When there are more than  $k$ centers in $C_j$, we increase $t$ to $t'$ and $L/2$ to $(L/2)\left(1+\varepsilon\right)^{t'-t}$, ensuring that the new radius satisfies $L/2\leq r\leq (1+\varepsilon)L/2$. The center set $C_j$ for new $r_j$ is then produced by employing the famous greedy algorithm \cite{hochbaum1985best}, applied to the cached points of $\Phi$ plus the currently arriving point.

For the correctness, we have the following lemma:

\begin{lemma}
    For any given $\varepsilon>0$, the center set $C_j$ with size bounded by $k$ produced using minimum $r_j$ among the $t$ values is a $\lambda$-independent center set for $\lambda= (2+\varepsilon)r^*$.  
\end{lemma}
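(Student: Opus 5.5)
We have to show that $C_j$, built for the smallest guess $r_j$ among the $t$ parallel instances that still has $|C_j|\le k$, is a $\lambda$-independent center set with $\lambda=(2+\varepsilon)r^*$; that is, Conditions 1) and 2) of Def.~\ref{def:1} hold for this $\lambda$. The plan has two parts: an invariant that each instance satisfies for its own radius, and a comparison of the selected $r_j$ with $r^*$.

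\emph{Step 1 (invariant).} For every $j$, I will show that $C_j$ is a $2r_j$-independent center set of the points seen so far. That is, its points are pairwise more than $2r_j$ apart, and every arrived point lies within $2r_j$ of $C_j$.

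The proof is by induction over the stream. When a point arrives, it is added to $C_j$ exactly when its distance to $C_j$ exceeds $2r_j$, so both conditions are preserved. When $L$ is raised, the new $C_j$ is produced by the greedy algorithm of \cite{hochbaum1985best} on $\Phi$ plus the current point, with threshold $2r_j$. Greedy gives pairwise separation directly. For coverage, a discarded point $p$ was within $2r_{j'}$ of some cached center, for a smaller old radius. The geometric spacing of the guesses, $r_1\le\varepsilon r_t$, should let me absorb this extra hop into the new radius.

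\emph{Step 2 (selection).} By Lem.~\ref{lem:sqrt3d-independent center set} and the remark after it, any $2r^*$-independent set has at most $k$ points. Hence every guess $r_j\ge r^*$ yields $|C_j|\le k$. The grid $\{L(1+\varepsilon)^j\}$ contains a value in $[r^*,(1+\varepsilon)r^*]$, provided $L\le r^*$ and the top of the grid exceeds $r^*$.

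\begin{itemize}
\item $L\le r^*$ holds initially, because among the first $k+1$ points two share an optimal center, so the minimum distance $L\le 2r^*$. (I will set $L$ to half this minimum, or otherwise adjust the constant, so that $L\le r^*$.)
\item The update preserves $L\le r^*$: it is triggered only when some instance has more than $k$ centers, which by Step 2 forces that guess to be below $r^*$.
\end{itemize}

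So the minimal feasible index $j$ satisfies $r_j\le(1+\varepsilon)r^*$. Combining with Step 1, every point is within $2r_j\le(2+2\varepsilon)r^*$ of $C_j$ and centers are more than $2r_j$ apart. Rescaling $\varepsilon$ gives the $(2+\varepsilon)$ bound. The separation condition reads $d>2r_j$; I will state it relative to the same $\lambda$ used for coverage, taking $\lambda=2r_j\le(2+\varepsilon)r^*$ as the operative value.

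\emph{Main obstacle.} The hard part is the coverage claim in Step 1 across updates of $L$, since discarded points are only remembered through $\Phi$. I would first try a chain argument: each discarded point is within $2r_{\text{old}}$ of a cached point, and $r_{\text{old}}\le\varepsilon r_{\text{new}}$. This gives coverage $2r_{\text{new}}(1+\varepsilon)$, and the extra factor is absorbed into $\varepsilon$. If that accumulates over repeated updates, I will bound it by a geometric series in $\varepsilon$.
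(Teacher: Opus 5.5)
The paper states this lemma without proof; it gives only the restart sketch and a pointer to Charikar et al. Your route is the standard analysis that sketch relies on: separation forces $|C_j|\le k$ once $r_j\ge r^*$, you maintain $L\le r^*$, and you chain the coverage bound across restarts. It works, but the step you flag as the main obstacle has to be settled rather than left as a fallback, and two other points need tightening.

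\textbf{The Step~1 invariant.} Exact coverage $2r_j$ is false after a restart. Your one-hop bound $2r_{\mathrm{new}}(1+\varepsilon)$ is not inductive either: the seeding instance may itself be a restart, which gives $2r_{\mathrm{new}}(1+\varepsilon+\varepsilon^2)$. State the invariant from the outset as \emph{separation $>2r_j$ and coverage $\le 2r_j/(1-\varepsilon)$ of all arrived points}.

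For the inductive step, seed from the smallest live instance $C_1^{\mathrm{old}}\subseteq\Phi$. The new radii exceed $r_t^{\mathrm{old}}\ge r_1^{\mathrm{old}}/\varepsilon$, so every earlier point $p$ has $d(p,C_{\mathrm{new}})\le 2r_{\mathrm{new}}+2r_1^{\mathrm{old}}/(1-\varepsilon)\le 2r_{\mathrm{new}}/(1-\varepsilon)$. Later insertions only add centers, so this persists. Your Step~2 uses only separation, so it survives this weakening.

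Consequently your final line must read $d(s,C_j)\le 2r_j/(1-\varepsilon)\le\frac{2(1+\varepsilon)}{1-\varepsilon}r^*$. There is then no single $\lambda$, in particular not $2r_j$, for which both conditions of Def.~\ref{def:1} hold. This is unavoidable. The literal statement demands separation $>(2+\varepsilon)r^*$, which cannot be guaranteed, because the minimum feasible guess can lie well below $r^*$ (e.g.\ when the group bounds make the fair optimum much larger than the unconstrained one). What you can prove is $|C_j|\le k$ together with coverage $(2+\varepsilon)r^*$ after rescaling $\varepsilon$. That is all Alg.~\ref{alg:jones_based_stream} uses.

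\textbf{Halving $L$ and the update rule.} Halving $L$ is necessary, not a matter of taste. With $L$ equal to the minimum distance among the first $k+1$ points, pigeonhole only gives $L\le 2r^*$, and the claim fails. For example, take $k=2$ and the points $-1,0,1,2,3$ on a line ($r^*=1$), arriving in the order $-1,1,3,0,2$. Then $L=2$, and the smallest instance has threshold $4(1+\varepsilon)$. It keeps only $\{-1\}$ and leaves $3$ at distance $4r^*$.

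Also state explicitly that an update sets the new $L$ to at most the largest failing guess; that is what preserves $L\le r^*$.

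Finally, drop the proviso that the top of the grid exceeds $r^*$. Failing instances are replaced immediately, so all live instances are feasible. The minimum feasible guess is therefore $r_1=L(1+\varepsilon)\le(1+\varepsilon)r^*$.
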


\section{\texorpdfstring{Better Ratios under Smaller $\lambda$}{Better Ratios under Smaller lambda}}
\label{subsec:imPSA}
According to 
Lem.~\ref{lem:sqrt3d-independent center set},
we get that when $\lambda$ decreases, the size of $\Gamma$ grows. Moreover,  smaller $\lambda$ indicates a better ratio and a larger size of $\Gamma$ indicates larger memory complexity.

\begin{lemma}
    \label{lem:sqrt3d-independent center set1} 
    Assume  $\Gamma\subseteq S$ is a $\lambda$-independent center set in Euclidean space.
For any positive interger $h$, if $\lambda = \sqrt{2+\frac{2}{h}}\cdot r^{*}$, then $|\Gamma|\leq h\cdot k$. 
\end{lemma}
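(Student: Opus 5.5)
The plan is to follow the pattern of the proof of Lem.~\ref{lem:sqrt3d-independent center set}. Every point of $\Gamma$ lies within distance $r^*$ of some optimal center $c^*\in C^*$, so assign each point of $\Gamma$ to one such center. It then suffices to show that no optimal center receives more than $h$ points of $\Gamma$, which gives $|\Gamma|\le h|C^*|\le hk$. The triangle argument used for $\sqrt{3}$ does not extend to many points or to high dimension. So I would replace it with a dimension-free averaging argument on squared distances.

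\textbf{Key step.} Suppose some $c^*$ has $n=h+1$ points $x_1,\dots,x_n\in\Gamma$ with $\|x_i-c^*\|\le r^*$. Put $y_i=x_i-c^*$. The standard Euclidean identity, obtained by expanding squares, gives
\[
\sum_{1\le i<j\le n}\|x_i-x_j\|^2 \;=\; n\sum_{i=1}^n\|y_i\|^2-\Bigl\|\sum_{i=1}^n y_i\Bigr\|^2 \;\le\; n\cdot n (r^*)^2 .
\]
The sum on the left has $n(n-1)/2$ terms. Hence the minimum squared pairwise distance is at most
\[
\frac{n^2 (r^*)^2}{n(n-1)/2}=\Bigl(2+\frac{2}{n-1}\Bigr)(r^*)^2=\Bigl(2+\frac{2}{h}\Bigr)(r^*)^2 .
\]
So some pair satisfies $d(x_i,x_j)\le\sqrt{2+\frac{2}{h}}\,r^*=\lambda$. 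This contradicts condition 1) of Definition~\ref{def:1}, which requires pairwise distances strictly greater than $\lambda$. Therefore each optimal center is within $r^*$ of at most $h$ points of $\Gamma$.

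\textbf{Conclusion.} Counting via the assignment gives $|\Gamma|\le hk$.

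\textbf{Main obstacle.} The only real step is to find a bound that holds in arbitrary dimension $d$, since packing-type arguments would depend on $d$. The squared-distance identity avoids this completely: it needs only $\|y_i\|\le r^*$, and we simply drop the nonnegative term $\|\sum_i y_i\|^2$. As sanity checks, $h=1$ recovers $\lambda=2r^*$ with $|\Gamma|\le k$, and $h=2$ recovers $\lambda=\sqrt{3}r^*$ with $|\Gamma|\le 2k$, consistent with Lem.~\ref{lem:sqrt3d-independent center set}.
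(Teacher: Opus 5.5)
Your proof is correct. Its outer structure matches the paper's: you charge each point of $\Gamma$ to an optimal center within distance $r^*$ and then bound how many points any single center receives, which is exactly the generalization of Lem.~\ref{lem:sqrt3d-independent center set} that the paper intends.

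The difference is in the key geometric step. The paper does not prove the packing bound; it cites it as a known fact from convex geometry (Lem.~\ref{lem:hpoints}, attributed to Gr\"unbaum). That fact is stated for a $\delta$-dimensional ball under the hypothesis $\delta\ge h\ge 2$, with the regular simplex as the extremal configuration. Your averaging argument via $\sum_{i<j}\|x_i-x_j\|^2=n\sum_i\|y_i\|^2-\|\sum_i y_i\|^2$ proves the bound directly in a few lines. It needs no assumption on the dimension, and it also covers $h=1$, i.e.\ the $\lambda=2r^*$ case. The cited statement, read literally, excludes both $h=1$ and $\delta<h$. The case $\delta<h$ would follow by embedding into $\mathbb{R}^h$, but the paper does not say so.

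What the paper's framing makes explicit is that the threshold $\sqrt{2+\frac{2}{h}}\,r^*$ is tight when $\delta\ge h$, because a regular $(h+1)$-simplex inscribed in the ball attains it. Your argument contains this only implicitly: equality forces $\|y_i\|=r^*$, $\sum_i y_i=0$, and all pairwise distances equal. Tightness is not needed for the lemma itself.
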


\begin{table*}[!h]
\footnotesize
\begin{tabularx}{\linewidth}{XXXll}
\toprule[1pt]
\textbf{Dataset} & \textbf{\#Records} & \textbf{\#Dimension} & \textbf{Feature} & \textbf{\#Groups}\\
\midrule[0.8pt]
Wholesale  & 440 & 6 & region = (77, 47, 316) & 3\\
\midrule
Student & 649 & 16 & school = (423, 226)& 2\\
&  &  &  address = (197, 452)& 2\\
&  &  &  sex = ( 383, 266)& 2\\
\midrule
Adult & 32,561 & 6 & gender = (10771, 21790)& 2\\
&  &  & race = (311, 1039, 3124, 271, 27816) & 5\\
\midrule[1pt]
\end{tabularx}
\caption{Datasets Summary.\label{tab:datasets}}
\end{table*}

We can prove the above lemma by generalizing the proof of  Lem.~\ref{lem:sqrt3d-independent center set}, and employing the following famous property in computational geometric \cite{grunbaum1967convex}: 

\begin{lemma}
    \label{lem:hpoints} 
 For a $\delta$-dimensional ball (sphere) with radius $r^*$ in an Euclidean space, for any positive integer $h$, $\delta\ge h\geq 2$, there exist at most $h$ points in the ball such that every pair of points is with a distance larger than $\sqrt{2+\frac{2}{h}}\cdot r^{*}$.
\end{lemma}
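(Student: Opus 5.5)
The plan is to argue by contradiction using the classical \emph{sum-of-squared-distances identity}, the standard averaging argument for bounding the minimum pairwise distance of points in a ball. Translate coordinates so that the ball is centered at the origin. Suppose that $N=h+1$ points $x_1,\dots,x_N$ lie in the ball, so $\|x_i\|\le r^*$ for all $i$, and that every pair satisfies $\|x_i-x_j\|>\sqrt{2+\frac{2}{h}}\,r^*$. I will show that the average squared pairwise distance is at most $(2+\frac{2}{h})(r^*)^2$, which contradicts the strict inequality. Any larger configuration contains such an $(h+1)$-subset, so this yields the bound of at most $h$ points. Note that the dimension $\delta$ plays no role in the upper bound; the hypothesis $\delta\ge h$ is only needed to show tightness, via the regular simplex, and the proof does not require it.

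The key step is the identity
\begin{equation*}
\sum_{1\le i<j\le N}\|x_i-x_j\|^2 \;=\; N\sum_{i=1}^{N}\|x_i\|^2-\Bigl\|\sum_{i=1}^{N}x_i\Bigr\|^2 ,
\end{equation*}
which follows by expanding $\|x_i-x_j\|^2=\|x_i\|^2+\|x_j\|^2-2\langle x_i,x_j\rangle$ and collecting terms. Dropping the nonpositive term and using $\|x_i\|\le r^*$ gives
\begin{equation*}
\sum_{i<j}\|x_i-x_j\|^2\le N^2 (r^*)^2 .
\end{equation*}

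There are $\binom{N}{2}=\frac{N(N-1)}{2}$ pairs, so some pair satisfies
\begin{equation*}
\|x_i-x_j\|^2\le \frac{2N}{N-1}(r^*)^2=\Bigl(2+\frac{2}{N-1}\Bigr)(r^*)^2=\Bigl(2+\frac{2}{h}\Bigr)(r^*)^2 .
\end{equation*}
This contradicts the assumed strict lower bound and proves the lemma.

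The only real obstacle is the choice of tool. A direct geometric or angular argument, like the inscribed-triangle reasoning in the proof of Lem.~\ref{lem:sqrt3d-independent center set}, does not generalize cleanly to higher dimensions. The averaging identity avoids this difficulty and handles every $h$ uniformly. As a consistency check, it recovers the case $h=2$ with threshold $\sqrt{3}\,r^*$ used in Lem.~\ref{lem:sqrt3d-independent center set}. Plugging the lemma into the counting argument of that proof, with at most $h$ points of $\Gamma$ near each optimal center, then gives Lem.~\ref{lem:sqrt3d-independent center set1}.
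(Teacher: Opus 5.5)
Your proof is correct. It differs from the paper mainly in that the paper proves nothing here. The lemma is invoked as a ``famous property'' with a citation to Gr\"unbaum's \emph{Convex Polytopes}, and the intended picture is the regular simplex inscribed in the sphere: the tetrahedron of Fig.~\ref{fig:thesmallestgamma}, for threshold $\frac{4}{\sqrt{6}}r^*=\sqrt{2+\frac{2}{3}}\,r^*$.

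Your identity $\sum_{i<j}\|x_i-x_j\|^2=N\sum_i\|x_i\|^2-\|\sum_i x_i\|^2$, combined with averaging over the $\binom{N}{2}$ pairs, replaces that citation with a short self-contained proof. It also makes two things explicit that the statement leaves implicit.

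\begin{itemize}
\item The upper bound is dimension-free. The hypothesis $\delta\ge h$ is therefore unnecessary for Lem.~\ref{lem:sqrt3d-independent center set1}, which uses only the upper bound; it matters only for whether the bound is attained.
\item The equality case identifies the extremal configuration. Suppose $h+1$ points in the ball (centered at the origin) have all pairwise distances at least $\sqrt{2+\frac{2}{h}}\,r^*$. Your argument then forces $\sum_i x_i=0$, $\|x_i\|=r^*$ for all $i$, and all pairwise distances equal. So the points form an inscribed regular $h$-simplex, which requires $\delta\ge h$. This is why the strict inequality in the statement is essential.
\end{itemize}

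Specializing to $h=2$ also gives a dimension-free proof of Lem.~\ref{lem:sqrt3d-independent center set}, without the planar inscribed-triangle reasoning used there. The citation buys the paper only brevity; your route buys rigor and generality at no extra cost.
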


In Fig.~\ref{fig:thesmallestgamma},  there exist at most $4$ points with piece-wise distance equals $\frac{4}{\sqrt{6}}r^*$ in a  $3$-dimensional ball with radius $r^*$. Then, there can be at most $h=4-1$ points buffered after streaming for each optimum cluster as a $\delta$-dimensional ball. Therefore, there are $O(3k)$ points in the buffer when setting $\lambda=\frac{4}{\sqrt{6}}r^*$.

\begin{figure}
    \centering
    \includegraphics[width=0.26\textwidth]{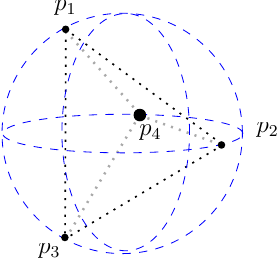}
    \caption{An example with smaller $\lambda$ ($\lambda=\frac{4}{\sqrt{6}}r^*$):  In a 3-dimensional ball with radius $r^*$, there are at most $4$ points with piece-wise distance equals $\frac{4}{\sqrt{6}}r^*$. In other words, there can be at most $h-1=3$ points with piece-wise distance strictly larger than $\frac{4}{\sqrt{6}}r^*$.}
    \label{fig:thesmallestgamma}
\end{figure}

Similar to Lem.~\ref{lem:1plus2sqrt3}, we have the following property by setting $\epsilon=\sqrt{2+\frac{2}{h}}-\sqrt{2}$:

\begin{lemma}\label{lem:bestratioforE}
The fair $k$-center problem admits a parameterized algorithm with an approximation ratio $1+\sqrt{2}+\epsilon\approx 2.414$ and a runtime $O(nk^2\left(\frac{m}{\epsilon}\right)2^{k(1+\log \frac{m}{\epsilon})} \log n)$ for any small $\epsilon>0$. Moreover, it admits a parameterized streaming algorithm with a ratio $1+2\sqrt{2}+\epsilon\approx 3.828$ and  $\frac{k\log \alpha}{\epsilon}$ memory.
\end{lemma}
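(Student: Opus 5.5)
The plan mirrors the $\sqrt{3}$ case: run the offline parameterized algorithm of Thm.~\ref{thm:paramenterbest} and the streaming framework of Lem.~\ref{lem:1plus2sqrt3}, but with $\lambda=\sqrt{2+\frac{2}{h}}\,r^*$ in place of $\sqrt{3}r^*$. The integer $h$ is chosen so that $\sqrt{2+\frac{2}{h}}\le \sqrt{2}+\epsilon$, which gives $h=O(1/\epsilon)$. Concretely, $\sqrt{2+2/h}-\sqrt{2}\le \frac{2/h}{2\sqrt{2}}$, so $h=\lceil 1/(\sqrt{2}\epsilon)\rceil$ suffices.

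\textbf{Step 1 (size bound).} By Lem.~\ref{lem:sqrt3d-independent center set1}, whose proof rests on Lem.~\ref{lem:hpoints}, each $\Gamma_l$ built with this $\lambda$ satisfies $|\Gamma_l|\le hk$. The argument is that every point of $\Gamma_l$ lies within $r^*$ of some optimal center, and each optimal ball contains at most $h$ points that are pairwise more than $\lambda$ apart. Hence $|\bigcup_l\Gamma_l|\le mhk=O(mk/\epsilon)$.

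\textbf{Step 2 (existence of a good subset).} Repeat the proof of Thm.~\ref{thm:the property of sqrt3r-independent set} verbatim with $\sqrt{3}$ replaced by $\sqrt{2+2/h}$. Map each $c^*\in C^*\cap S_l$ either to itself, if $c^*\in\Gamma_l$, or to some $c\in\Gamma_l$ with $d(c,c^*)\le\lambda$; such a $c$ exists by the maximality argument of Lem.~\ref{lem:algosqrt3r-independentset}. The resulting $\Gamma$ is fair, has $|\Gamma|\le k$, and covers every point within $(1+\sqrt{2+2/h})r^*\le(1+\sqrt{2}+\epsilon)r^*$. Because the enumeration over subsets of $\bigcup_l\Gamma_l$ returns the best fair set, its radius is no worse.

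\textbf{Step 3 (runtime).} The number of enumerated sets is $\binom{mhk}{k}$. Apply the Stirling computation of Lem.~\ref{lem:uppertbinom} with $2m$ replaced by $mh=\Theta(m/\epsilon)$: the term $(N-1)k(\log N-\log(N-1))$ is again $O(k)$, so $\binom{Nk}{k}=O(2^{k(1+\log N)}\cdot Nk)$ for $N=\Theta(m/\epsilon)$. Multiplying by $O(nk)$ per evaluated set and $O(\log n)$ for the binary search on $r^*$ gives $O(nk^2(m/\epsilon)2^{k(1+\log(m/\epsilon))}\log n)$.

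\textbf{Step 4 (streaming).} Copy the argument of Lem.~\ref{lem:1plus2sqrt3}. Any $s\in S_l$ has some $i\in\Gamma_l$ with $d(s,i)\le\lambda$, and by Step 2, $\max_{i}d(i,C)\le(1+\lambda/r^*)r^*$. Therefore $d(s,C)\le(1+2\sqrt{2+2/h})r^*\le(1+2\sqrt{2}+O(\epsilon))r^*$. Rescaling $\epsilon$ and absorbing the $(1+\epsilon)$ loss from guessing $r^*$ completes this part. The memory is $hk$ points per group per guess, which is $O(k\log\alpha/\epsilon)$ up to the factor $m$ and the guessing overhead.

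\textbf{Main obstacle.} The delicate part is Step 1's reliance on Lem.~\ref{lem:hpoints}, which applies only for $\delta\ge h$ and is quoted somewhat loosely. I would accept it as stated. If $\delta<h$, I would instead fall back on a packing bound: in $\mathbb{R}^\delta$, a ball of radius $r^*$ contains at most a constant number of points with pairwise distance greater than $\sqrt{2}r^*$, and that constant depends only on $\delta$. The Stirling step is routine once $N$ is identified.
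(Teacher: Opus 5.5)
Your route is the paper's own: $\lambda=\sqrt{2+2/h}\,r^*$ with $h\approx 1/(\sqrt{2}\epsilon)$, the $hk$ bound of Lem.~\ref{lem:sqrt3d-independent center set1}, the existence argument of Thm.~\ref{thm:the property of sqrt3r-independent set}, and the streaming argument of Lem.~\ref{lem:1plus2sqrt3}. Steps 1, 2 and the ratio part of Step 4 are sound. \textbf{Step 3, however, fails as written.} The Stirling expansion gives $\log\binom{Nk}{k}=k\log N+k(N-1)\log\frac{N}{N-1}+O(\log(Nk))$. The middle term is $O(k)$, but it tends to $k\log e\approx 1.44k$ as $N$ grows. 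An $O(k)$ term in the exponent is a factor $2^{O(k)}$, and it cannot be absorbed by the ``$1+$'' in $2^{k(1+\log N)}$ plus a polynomial factor. In fact $\binom{Nk}{k}=O(2^{k(1+\log N)}\cdot Nk)$ is false for every fixed $N\ge 3$, because $\binom{Nk}{k}^{1/k}\to N\bigl(1+\frac{1}{N-1}\bigr)^{N-1}>2N$; for instance $\binom{3k}{k}=\Theta(k^{-1/2}(27/4)^k)$, against $6^k$. The computation you are importing is itself flawed: the last step of the proof of Lem.~\ref{lem:uppertbinom} drops the factor $k$ in front of $\frac{1}{\ln 2}$, and already $\binom{2k}{k}\approx 4^k/\sqrt{\pi k}$ is not $O(2^k k)$. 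Writing $N=\Theta(m/\epsilon)$ instead of the exact $N$ is also too coarse, since a constant $c$ in $N=cm/\epsilon$ becomes a factor $c^k$.

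The claimed runtime is nevertheless true for small $\epsilon$, but for a different reason. Use $\binom{Nk}{k}\le (eN)^k$ with the exact $N=mh$ and $h=\lceil 1/(\sqrt{2}\epsilon)\rceil\le \frac{1}{\sqrt{2}\epsilon}+1$. Since $e/\sqrt{2}\approx 1.92<2$, you get $eN\le 2m/\epsilon$ whenever $\epsilon\le \frac{2}{e}-\frac{1}{\sqrt{2}}\approx 0.0287$. Hence $\binom{mhk}{k}\le 2^{k(1+\log(m/\epsilon))}$, and the $O(nk)$ evaluation cost and $O(\log n)$ search give the stated bound. So the slack comes from the constant $1/\sqrt{2}$ in your choice of $h$, not from the Stirling remainder; it would disappear for $h=c/\epsilon$ with $c>2/e$.

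Two smaller remarks. First, your Step 1 worry about $\delta\ge h$ is unnecessary. Put the optimal center at the origin; then any $n=h+1$ points of the ball satisfy $\sum_{i<j}\|x_i-x_j\|^2=n\sum_i\|x_i\|^2-\|\sum_i x_i\|^2\le n^2(r^*)^2$. So some pair is within $\sqrt{2n/(n-1)}\,r^*=\sqrt{2+2/h}\,r^*$, in every dimension. Second, your memory caveat is accurate: storing $\bigcup_l\Gamma_l$ costs up to $mhk$ points per guess. Neither your argument nor the paper's delivers the literal $\frac{k\log\alpha}{\epsilon}$, which suppresses $m$ and the guessing overhead just as Lem.~\ref{lem:1plus2sqrt3} does.
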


\section{Extended Experiential Results}
\label{sec:exper}
\subsection*{Datasets}

 Following the previous work~\cite{jones2020fair,chen2019proportionally}, we evaluate algorithms on three datasets summarized in Table~\ref{tab:datasets} and the details are described below.

\descr{Wholesale}\footnote{\url{https://archive.ics.uci.edu/dataset/292/wholesale+customers}} dataset contains monetary spending on 440 (\#Records) products of clients of a wholesale distributor. We select the \textit{channel} (Horeca: 298, Retail: 142) attribute for group assignments.
 
\descr{Student}\footnote{\url{https://archive.ics.uci.edu/dataset/320/student+performance}} dataset includes 649 (\#Records) information about grades, socioeconomic, and school data relevant to predicting the academic performance of students in Math. We use features \textit{sex} (Male: 226, Female: 383) for group assignments.

\descr{Adult}\footnote{\url{https://archive.ics.uci.edu/dataset/2/adult}} dataset contains socioeconomic 32,561 (\#Records) individuals for prediction of whether income exceeds 50k/year. We use \textit{gender} (Female: 10,771, Male: 21,790) for group assignments.

\subsection{Runtime(s)}
\label{subsec:runtime}

\begin{figure}[t]
  \centering
  \includegraphics[width=\linewidth]{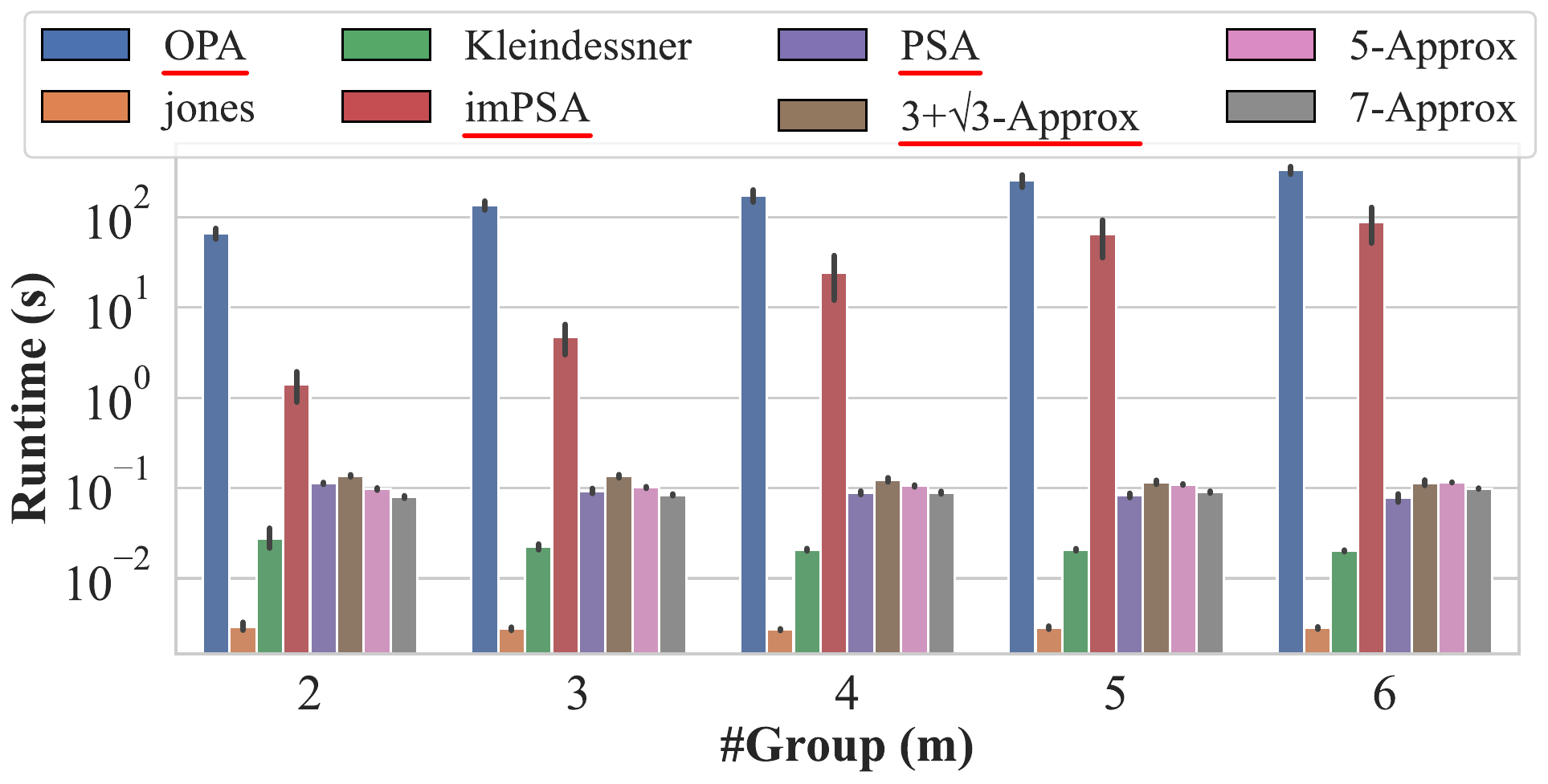} 
  \caption{Running time of our algorithms (indicated by a red underline) compared with other baselines.}
  \label{fig:simu_runtime}
\end{figure}

On the simulated dataset, we report the running times of our algorithms and the baselines in Figure~\ref{fig:simu_runtime}. We observe that OPA and imPSA are slower than the other methods, since they are parameterized algorithms, consistent with our theoretical analysis. In contrast, PSA exhibits a running time comparable to the other streaming algorithms. We attribute this to the fact that all streaming methods are sensitive to the center size $k$, which is small in our synthetic setting.
Moreover, all of our online algorithms are slower than those of \cite{jones2020fair} and \cite{kleindessner2019fair} on this small dataset, because the streaming implementations must read and process the input by line, incurring additional I/O and initialization overhead.

\end{document}